\newcommand{\Rspace}        {\mm{\mathbb{R}}}
\theoremstyle{definition}
\newtheorem{defin}{Definition}
\theoremstyle{definition}
\theoremstyle{plain}
\newtheorem{thm}[defin]{Theorem}
\newtheorem{lemma}[defin]{Lemma}
\newtheorem{coroll}[defin]{Corollary}
\newcommand {\mm}[1] {\ifmmode{#1}\else{\mbox{\(#1\)}}\fi}
\newcommand{\ignore}[1]{}
\def \R {{\mathbb R}}
\def \Dpo {\Delta^{(1)}}
\def \Dpt {\Delta^{(2)}}
\def \L {{\mathcal L}}
\def \X {{\mathcal{X}}}
\def \Y {{\mathcal{Y}}}
\def \filt {\X}
\newcommand{\norm}[1]{\left\lVert #1 \right\rVert_2}
\newcommand{\infnorm}[1]{\left\lVert #1 \right\rVert_\infty}
\newcommand{\vol}{\mathrm{vol}}
\newcommand{\eps}{\epsilon}
\newcommand{\boxset}{\mathcal{B}}
\begin{document}

\nocite{*}

\title{A Kernel for Multi-Parameter Persistent Homology}
\author{Ren\'{e} Corbet\thanks{Graz University of Technology, corbet@tugraz.at}, Ulderico Fugacci\thanks{Graz University of Technology, fugacci@tugraz.at}, Michael Kerber\thanks{Graz University of Technology, kerber@tugraz.at}, Claudia Landi\thanks{University of Modena and Reggio Emilia, claudia.landi@unimore.it}, Bei Wang\thanks{University of Utah, beiwang@sci.utah.edu}}
\date{}

\maketitle

\begin{abstract}
Topological data analysis and its main method, persistent homology, provide a toolkit for computing topological information of high-dimensional and noisy data sets.
Kernels for one-parameter persistent homology have been established to connect persistent homology with machine learning techniques with applicability on shape analysis, recognition and classification. 
We contribute a kernel construction for multi-parameter persistence by integrating a one-parameter kernel weighted along straight lines. 
We prove that our kernel is stable and efficiently computable, which establishes a theoretical connection between topological data analysis and machine learning for multivariate data analysis. 
\end{abstract}

\section{Introduction}
\label{sec:introduction}

Topological data analysis (TDA) is an active area in data science with a growing interest and notable successes in a number of applications in science and engineering~\cite{CarstensHoradam2013,ChanCarlssonRabadan2013,GiustiPastalkovaCurto2015,GuoBanerjee2016,HiraokaNakamuraHirata2016,LiChengGlicksberg2015,WangOmbaoChung2017,YooKimAhn2016}. 
TDA extracts in-depth geometric information in amorphous solids~\cite{HiraokaNakamuraHirata2016}, determines robust topological properties of evolution from genomic data sets~\cite{ChanCarlssonRabadan2013} and identifies distinct diabetes subgroups~\cite{LiChengGlicksberg2015} and a new subtype of breast cancer~\cite{NicolauaLevinebCarlsson2011} in high-dimensional clinical data sets, to name a few. 
In the context of shape analysis, TDA techniques have been used in the recognition, classification~\cite{TurnerMukherjeeBoyer2014, LiOvsjanikovChazal2014}, summarization~\cite{BiasottiFalcidienoSpagnuolo2000}, and clustering~\cite{SkrabaOvsjanikovChazal2010}  of 2D/3D shapes and surfaces. 
Oftentimes, such techniques capture and highlight structures in data that conventional techniques fail to treat~\cite{LiOvsjanikovChazal2014,SkrabaOvsjanikovChazal2010} or reveal properly~\cite{HiraokaNakamuraHirata2016}. 

TDA employs the mathematical notion of \emph{simplicial complexes}~\cite{Munkres1984} to encode higher order interactions in the system, and at its core uses the computational framework of \emph{persistent homology}~\cite{EdelsbrunnerLetscherZomorodian2002,  ZomorodianCarlsson2005, EdelsbrunnerHarer2008, EdelsbrunnerHarer2010, EdelsbrunnerMorozov2012} to extract multi-scale topological features of the data. 
In particular, TDA extracts a rich set of topological features from high-dimensional and noisy data sets that complement geometric and statistical features, which offers a different  perspective for machine learning. 
The question is, \emph{how can we establish and enrich the theoretical connections between TDA and machine learning}?

Informally, \emph{homology} was developed to classify topological spaces by examining their topological features such as connected components, tunnels, voids and holes of higher dimensions; \emph{persistent homology} studies homology of a data set at multiple scales. Such information is summarized by the \emph{persistence diagram}, a finite multi-set of points in the plane.
A persistence diagram yields a complete description of the topological properties of a data set, making it an attractive
tool to define features of data that take topology into consideration. 
Furthermore, a celebrated theorem of persistent homology is the \emph{stability}
of persistence diagrams~\cite{Cohen-SteinerEdelsbrunnerHarer2007} -- small changes in the data lead to  small changes of the corresponding diagrams, making it suitable for robust data analysis. 

However, interfacing persistence diagrams directly with machine learning poses technical difficulties, because persistence diagrams contain point sets in the plane that do not have the structure of an inner product, which allows length and angle to be measured. 
In other words, such diagrams lack a Hilbert space structure for kernel-based learning methods such as kernel SVMs or PCAs~\cite{Reininghaus2015}. 
Recent work proposes several variants of \emph{feature maps}~\cite{Bubenik2015,Reininghaus2015, KwittHuberNiethammer2015}
that transform persistence diagrams into $L^2$-functions over $\Rspace^2$. This idea immediately enables the application of topological features for kernel-based machine learning methods as establishing a kernel function implicitly defines a Hilbert space structure~\cite{Reininghaus2015}. 

A serious limit of standard persistent homology and its initial interfacing with machine learning~\cite{Bubenik2015,Reininghaus2015, KwittHuberNiethammer2015, KusanoFukumizuHiraoka2016, HoferKwittNiethammer2017} is the restriction
to only a single scale parameter, thereby confining its applicability to the univariate setting.
However, in many real-world applications, such as data acquisition and geometric modeling, we often encounter richer information described by multivariate data sets~\cite{Carlsson2009,CarlssonMemoli2010,ChazalCohen-SteinerGuibas2009}. 
Consider, for example, climate simulations where multiple physical parameters such as temperature and pressure are computed simultaneously; and we are interested in understanding the interplay between these parameters. 
Consider another example in multivariate shape analysis, various families of functions carry information about the geometry of 3D shape objects, such as mesh density, eccentricity~\cite{SinghMemoliCarlsson2007} or Heat Kernel Signature~\cite{SunOvsjanikovGuibas2009}; and we are interested in creating multivariate signatures of shapes from such functions.  
Unlike the univariate setting, very few topological tools exist for the study of multivariate data~\cite{EdelsbrunnerHarer2002,EdelsbrunnerHarerPatel2008,SinghMemoliCarlsson2007}, let alone the integration of multivariate topological features with machine learning. 

The active area of \emph{multi-parameter persistent homology}~\cite{Carlsson2009} studies the extension of persistence to two or more (independent) scale parameters.  A complete discrete invariant such as the persistence diagram does not exist for more than one parameter~\cite{Carlsson2009}. To gain partial information, it is common to study \emph{slices}, that is, one-dimensional affine subspaces where all parameters are connected by a linear equation.
In this paper, we establish, for the first time, a theoretical connection between topological features and machine learning algorithms via the kernel approach for multi-parameter persistent homology.
Such a theoretical underpinning is necessary for applications in multivariate data analysis.

\paragraph{Our contribution}
 We propose the first kernel construction
for multi-parameter persistent homology.
Our kernel is \emph{generic},  \emph{stable} and
can be \emph{approximated in polynomial time}.
For simplicity, we formulate all our results for the case of two
parameters, although they extend to more than two parameters.   

Our input is a data set that is filtered according to two
scale parameters and has a finite description size;
we call this a \emph{bi-filtration} and postpone its formal definition
to Section~\ref{sec:preliminaries}.
Our main contribution is the definition of a feature map that assigns to a bi-filtration $\X$ a function $\Phi_\X:\Dpt\to\Rspace$, where $\Dpt$ is a subset of $\Rspace^4$. 
Moreover, $\Phi_\X^2$ is integrable over $\Dpt$,
effectively including the space of bi-filtrations into the Hilbert space
$L^2(\Dpt)$. Therefore, based on the standard scalar product in $L^2(\Dpt)$, a $2$-parameter 
kernel is defined such that for two given bi-filtrations $\X$ and $\Y$ we have 
\begin{equation}
\label{eqn:kernel_def}
\langle \X, \Y \rangle_{\Phi} := \int_{\Dpt} \Phi_\X\Phi_\Y d\mu.
\end{equation}
We construct our feature map by interpreting a point of $\Dpt$
as a pair of (distinct) points in $\R^2$ that define a unique slice.
Along this slice, the data simplifies to a \emph{mono-filtration} (i.e., a filtration that depends on a single scale parameter),
and we can choose among a large class
of feature maps and kernel constructions 
of standard, one-parameter persistence.  
To make the feature map well-defined,
we restrict our attention to a finite rectangle $R$.

Our inclusion into a Hilbert space induces a distance between bi-filtrations as
\begin{equation}
d_\Phi(\X,\Y):=\sqrt{\int (\Phi_\X- \Phi_\Y)^2 d\mu}.
\end{equation}
We prove a stability bound, relating this distance measure to the matching distance and the interleaving distance (see the paragraph on related work below).
We also show that this stability bound is tight up to constant factors (see Section~\ref{sec:stability}).

Finally, we prove that our kernel construction admits an efficient
approximation scheme. Fixing an absolute error bound $\eps$, we give a
polynomial time algorithm in $1/\eps$ and the  size of the bi-filtrations $\X$ and $\Y$ to compute a value $r$
such that $r\leq \langle \X, \Y\rangle_{\Phi}\leq r+\eps$.
On a high level, the algorithm subdivides the domain into boxes
of smaller and smaller width and
evaluates the integral of (\ref{eqn:kernel_def})
by lower and upper sums within each subdomain, terminating the process
when the desired accuracy has been achieved.
The technical difficulty lies in the accurate
and certifiable approximation of the variation of the feature map
when moving the argument within a subdomain.

\paragraph{Related work}
Our approach heavily relies on the construction of stable and efficiently
computable feature maps for mono-filtrations. This line of research was 
started by Reininghaus et al.~\cite{Reininghaus2015}, whose approach we
discuss in some detail in Section~\ref{sec:preliminaries}.
Alternative kernel constructions appeared in~\cite{KusanoFukumizuHiraoka2016,CarriereCuturiOudot2017}.
Kernel constructions fit into the general framework of including the space
of persistence diagrams in a larger space with more favorable properties.
Other examples of this idea are
persistent landscapes~\cite{Bubenik2015}
and persistent images~\cite{AdamsEmersonKirby2017},
which can be interpreted as kernel constructions as well. 
Kernels and related variants defined on mono-filtrations have been used to discriminate and classify shapes and surfaces~\cite{Reininghaus2015,HoferKwittNiethammer2017}. 
An alternative approach comes from the definition of suitable
(polynomial) functions on persistence diagrams to arrive at a fixed-dimensional
vector in $\R^d$ on which machine learning tasks can be performed;
see~\cite{AdcockRubinCarlsson2014, DiFabio2015, Adcock2016, Kalivsnik2018}.

As previously mentioned, a persistence diagram for multi-parameter persistence
does not exist~\cite{Carlsson2009}. However, bi-filtrations still admit
meaningful distance measures, which lead to the notion of closeness
of two bi-filtrations. The most prominent such distance is the 
\emph{interleaving distance}~\cite{Lesnick2015}, which, however, has recently been proved to be
NP-complete to compute and approximate~\cite{BjerkevikBotnanKerber2018}.
Computationally attractive alternatives are (multi-parameter) bottleneck distance \cite{DeyXin2018} and the \emph{matching distance}
\cite{BiasottiCerriFrosini2011,KerberLesnickOudot2019}, which compares the persistence diagrams along all slices
(appropriately weighted) and picks the worst discrepancy as the distance
of the bi-filtrations. This distance can be approximated up to a precision $\eps$
using an appropriate subsample of the lines \cite{BiasottiCerriFrosini2011}, and also computed exactly in polynomial time \cite{KerberLesnickOudot2019}. Our approach  extends these works in the sense that not just a distance,
but an inner product on bi-filtrations, is defined with our inclusion into
a Hilbert space.
In a similar spirit, the software library RIVET~\cite{Lesnick2015b}
provides a visualization tool to explore bi-filtrations by scanning
through the slices.

\section{Preliminaries}
\label{sec:preliminaries}

We introduce the basic topological terminology needed in this work.
We restrict ourselves to the case of simplicial complexes as input structures for a clearer geometric intuition of the concepts, but our results generalize to more abstract input types (such as minimal representations of persistence modules) without problems.

\paragraph{Mono-filtrations}
Given a vertex set $V$, an \emph{(abstract) simplex}
is a non-empty subset of $V$,
and an \emph{(abstract) simplicial complex} is a collection of such subsets
that is closed under the operation of taking non-empty subsets. 
A \emph{subcomplex} of a simplicial complex $X$
is a simplicial complex $Y$ with $Y\subseteq X$.
Fixing a finite simplicial complex $X$,
a \emph{mono-filtration} $\X$ of $X$ is a map that assigns
to each real number $\alpha$, 
a subcomplex $\X(\alpha)$ of $X$, with the property that
whenever $\alpha\leq\beta$, $\X(\alpha)\subseteq \X(\beta)$.
The \emph{size} of $\X$ is the number of simplices of $X$.
Since $X$ is finite, $\X(\alpha)$  changes  at only finitely many places
when $\alpha$ grows continuously from $-\infty$ to $+\infty$;
we call these values \emph{critical}.
More formally, $\alpha$ is \emph{critical} if there exists no open neighborhood
of $\alpha$ such that the mono-filtration assigns the identical subcomplex
to each value in the neighborhood.
For a simplex $\sigma$ of $X$,
we call the \emph{critical value} of $\sigma$ the infimum over all $\alpha$
for which $\sigma\in \X(\alpha)$. For simplicity, we assume that this infimum
is a minimum, so every simplex has a unique critical value wherever it is
included in the mono-filtration. 

\paragraph{Bi-filtrations}
For points in $\R^2$, we write $(a,b)\leq (c,d)$ if $a\leq c$ and $b\leq d$.
Similarly, we say $(a,b)<(c,d)$ if $a< c$ and $b< d$.
For a finite simplicial complex $X$, a \emph{bi-filtration} $\X$ of $X$
is a map that assigns to each point $p\in\R^2$ a subcomplex $\X(p)$ of $X$, 
such that whenever $p\leq q$, $\X(p)\subseteq \X(q)$.
Again, a point $p=(p_1,p_2)$ is called \emph{critical} for $\X$ if, for any $\epsilon>0$, both $\X(p_1-\epsilon, p_2)$ and $\X(p_1, p_2-\epsilon)$ are not identical to $\X(p)$.
Note that unlike in the mono-filtration case, the set of critical points
might not be finite. We call a bi-filtration \emph{tame} if it has only
 finitely many such critical points.
For a simplex $\sigma$, a point $p\in\R^2$ is \emph{critical} for $\sigma$
if, for any $\epsilon>0$, $\sigma$ is neither in $\X(p_1-\epsilon, p_2)$ nor in $\X(p_1, p_2-\epsilon)$, whereas $\sigma$ is in both $\X(p_1+\epsilon, p_2)$ and $\X(p_1, p_2+\epsilon)$.
Again, for simplicity, we assume that $\sigma\in \X(p)$ in this case.
A consequence of tameness is that each simplex has a finite number of critical points.
Therefore, we can represent a tame bi-filtration of a finite simplicial complex
$X$ by specifying the set of critical points for each simplex in $X$.
The sum of the number of critical points over all simplices of $X$ is called the \emph{size}
of the bi-filtration. We henceforth assume that bi-filtrations are always
represented in this form; in particular, we assume tameness throughout
this paper.

A standard example to generate bi-filtrations is by an arbitrary function
$F:X\to\R^2$ with the property that if $\tau\subset\sigma$ are two simplices
of $X$, $F(\tau)\leq F(\sigma)$. 
We define the \emph{sublevel set} $\X^F(p)$ as 
\begin{equation*}
\X^F(p):=\{\sigma\in X\mid F(\sigma)\leq p\},
\end{equation*}
and let $\X^F$ denote its corresponding \emph{sublevel set bi-filtration}. 
It is easy to verify that $\X^F$ yields a (tame) bi-filtration and $F(\sigma)$ is the unique critical value of $\sigma$ in the bi-filtration.

\begin{figure}[!h]
\centering 
\includegraphics[width=0.4\columnwidth]{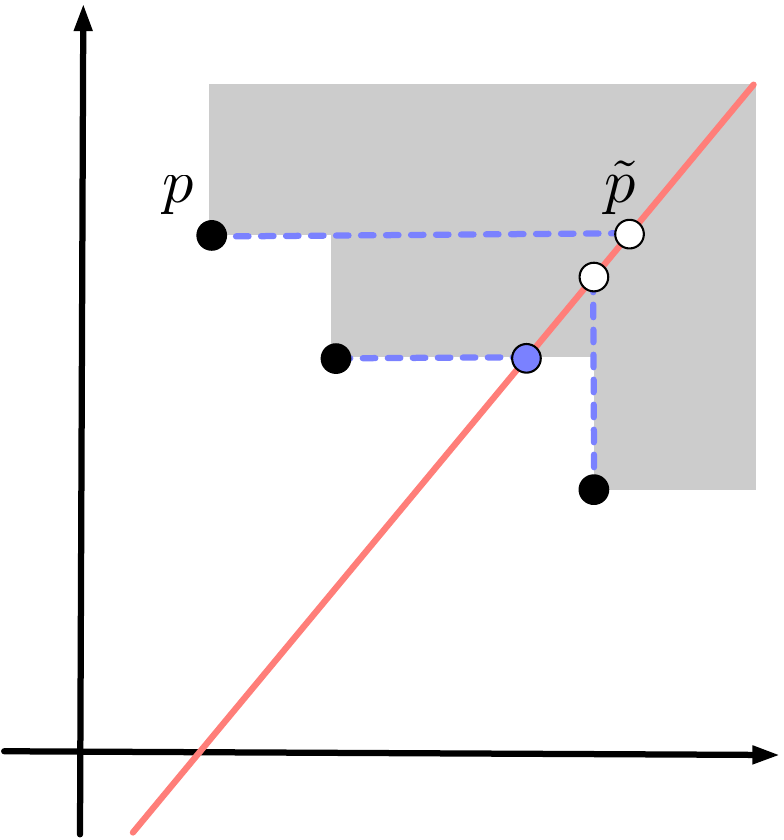}
\caption{The three black points mark the three critical points of some simplex $\sigma$ in $X$. The shaded area denotes the positions at which $\sigma$ is present in the bi-filtration. Along the given slice (red line), the dashed lines denote the first position
where the corresponding critical point ``affects'' the slice.
This position is either the upper-vertical, or right-horizontal projection of the critical point onto the slice, depending on whether the critical point is below or above
the line. For $\sigma$, we see that it enters the slice at the position marked
by the blue point.}
\label{fig:slice}
\end{figure}

\paragraph{Slices of a bi-filtration}
A bi-filtration $\X$ contains an infinite collection of mono-filtrations.
Let $\L$ be the set of all non-vertical lines in $\R^2$ with positive slope.
Fixing any line $\ell \in \L$,
we observe that when traversing this
line in positive direction, the subcomplexes of the bi-filtration are
nested in each other.
Note that $\ell$ intersects the anti-diagonal $x=-y$ in a unique base point $b$.
Parameterizing $\ell$ as $b+\lambda\cdot a$, where $a$ is the (positive)
unit direction vector of $\ell$, we obtain the mono-filtration
\[\X_\ell(\alpha):=\X(b+\alpha\cdot a).\]
We will refer to this mono-filtration $\X_\ell$ as a \emph{slice} of $\X$ along $\ell$
(and sometimes also call $\ell$ itself the slice, abusing notation).
The critical values of a slice can be inferred by the critical points 
of the bi-filtration in a computationally straightforward way.
Instead of a formal description, we refer to Figure~\ref{fig:slice} for a graphical description. Also, if the bi-filtration is of size $n$, each of its
slices is of size at most $n$.

\paragraph{Persistent homology}
A mono-filtration $\X$ gives rise to a persistence diagram. 
Formally, we obtain this diagram by applying the homology functor to $\X$,
yielding a sequence of vector spaces and linear maps between them, and splitting this sequence into indecomposable parts using representation theory.
Instead of rolling out the entire theory (which is explained, for instance, in~\cite{Oudot2015}), we give an intuitive description here. 

Persistent homology measures how the topological features of a data set evolve when considered across a varying scale parameter $\alpha$.
The most common example involves a point cloud in $\Rspace^d$, where considering a fixed scale $\alpha$ means replacing the points by balls of radius $\alpha$. 
As $\alpha$ increases, the data set undergoes various topological configurations, starting as a disconnected point cloud for $\alpha=0$ and ending up as a topological ball when $\alpha$ approaches $\infty$; see Figure~\ref{fig:persistence}(a) for an example in $\Rspace^2$. 

The topological information of this process can be summarized as
a finite multi-set of points in the plane, called the \emph{persistence diagram}. Each point of the diagram corresponds to a topological feature (i.e., connected components, tunnels, voids, etc.), and its coordinates specify at which scales the feature appears and
disappears in the data. As illustrated in Figure~\ref{fig:persistence}(a), all five  (connected) components are born (i.e.,~appear) at $\alpha=0$. The green component dies (i.e., ~disappears) when it merges with the red component at $\alpha=2.5$; similarly, the orange, blue and pink components die at scales $3$, $3.2$ and $3.7$, respectively. The red component never dies as $\alpha$ goes to $\infty$. The $0$-dimensional persistence diagram is defined to have one point per component with birth and death value as its coordinates (Figure~\ref{fig:persistence}(c)). The \emph{persistence} of a feature is then merely its distance from the diagonal. While we focus on the components, the concept generalizes to higher dimensions, such as tunnels ($1$-dimensional homology) and voids ($2$-dimensional homology). For instance, in Figure~\ref{fig:persistence}(a),  a tunnel appears at $\alpha=4.2$ and disappears at $\alpha=5.6$, which gives rise to a purple point $(4.2, 5.6)$ in the $1$-dimensional persistence diagram (Figure~\ref{fig:persistence}(c)). 

\begin{figure*}[!h]
\centering 
\includegraphics[width=1.04\columnwidth]{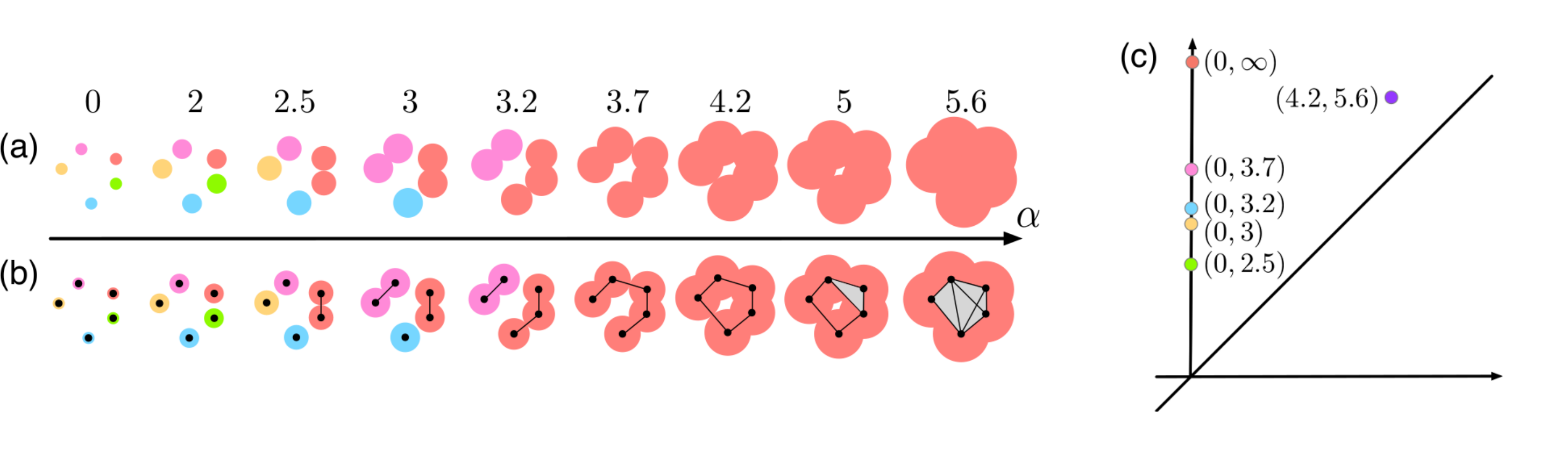}
\vspace{-8mm}
\caption{Computing persistent homology of a point cloud in $\Rspace^2$. (a) A nested sequence of topological spaces formed by unions of balls at increasing parameter values. (b) A mono-filtration of simplicial complexes that captures the same topological information as in (a). (c) $0$-dimensional and $1$-dimensional persistence diagrams combined.}
\label{fig:persistence}
\end{figure*}

 \begin{figure}
 \centering
 \vspace{-2mm}
 \includegraphics[width=0.58\linewidth]{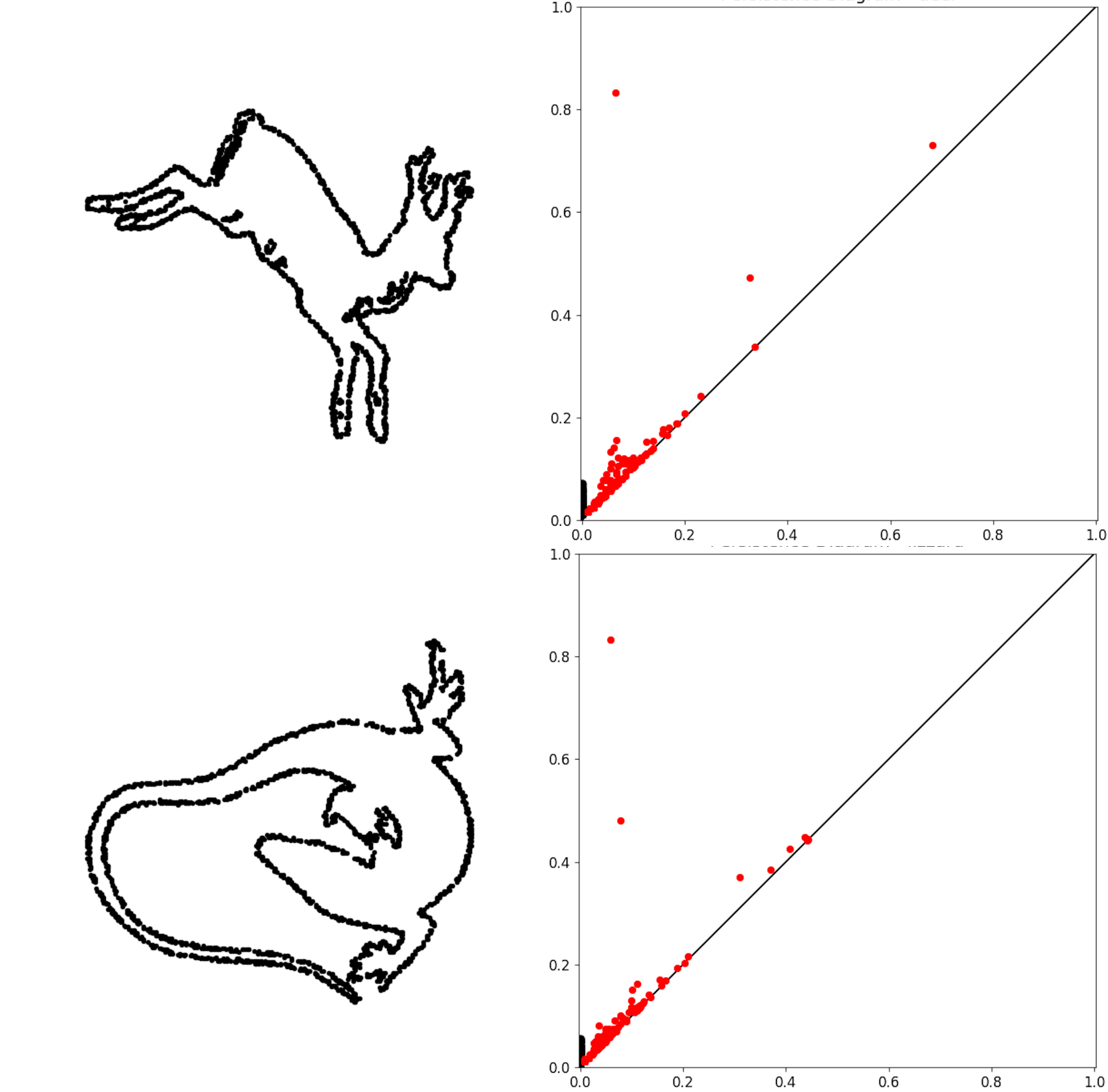}
 \vspace{-4mm}
 \caption{The persistence diagrams of 2D shape objects. Black and red points are $0$-dimensional and $1$-dimensional features respectively (ignoring points with $\infty$ persistence).}
 \label{fig:persistence-2D-objects}
 \end{figure}

From a computational point of view, the nested sequence of spaces formed by unions of balls (Figure~\ref{fig:persistence}(a)) can be replaced by a nested sequence of simplicial complexes by taking their nerves, thereby forming a mono-filtration of simplicial complexes that captures the same topological information but has a much smaller footprint (Figure~\ref{fig:persistence}(b)). 

In the context of shape analysis, we apply persistent homology to capture the topological information of 2D and 3D shape objects by employing various types of mono-filtrations. A simple example is illustrated in Figure~\ref{fig:persistence-2D-objects}: we extract point clouds sampled from the boundary of 2D shape objects and compute the persistence diagrams using Vietoris-Rips complex filtrations. 

\paragraph{Stability of persistent homology}
Bottleneck distance represents a similarity measure between persistence diagrams. Let $D$, $D'$ be two persistence diagrams. Without loss of generality, we can assume that both contain infinitely many copies of the points on the diagonal.
The {\em bottleneck distance} between $D$ and $D'$ is defined as
\begin{equation}
d_B(D, D'):= \inf_{\gamma} \sup_{x\in D} \|x-\gamma(x)\|_\infty,
\end{equation}
where $\gamma$ ranges over all bijections from $D$ to $D'$. We will also use the notation $d_B(\X,\Y)$ for two mono-filtrations
instead of $d_B(D(\X),D(\Y))$.

A crucial result for persistent homology is the {\em stability theorem} proven in \cite{Cohen-Steiner2007} and re-stated in our notation as follows. Given two functions $f,g: X\to \R$ whose sublevel sets form two mono-filtrations of a finite simplicial complex $X$, the induced persistence diagrams satisfy
\begin{equation}
d_B(D_f, D_g)\leq\|f-g\|_\infty:=\sup_{\sigma \in X} |f(\sigma)-g(\sigma)|.
\label{eqn:mono_stability}
\end{equation}

\paragraph{Feature maps for mono-filtrations}
Several feature maps aimed at the construction of a kernel for mono-filtrations have been proposed in the literature~\cite{Bubenik2015,KwittHuberNiethammer2015,Reininghaus2015}. 
We discuss one example: 
the persistence scale-space kernel~\cite{Reininghaus2015}
assigns to a mono-filtration $\X$ 
an $L^2$-function $\phi_\X$ defined on $\Dpo:=\left\{(x_1,x_2) \in \R^2 \mid x_1<x_2\right\}$.
The main idea behind the definition of $\phi_\X$ is to define a sum of Gaussian peaks, all of the same height and width, with each peak centered at one finite off-diagonal point of the persistence diagram $D(\X)$ of $\X$. To make the construction robust against perturbations, the function has to be equal to $0$ across the diagonal (the boundary of $\Dpo$).
This is achieved by adding negative Gaussian peaks at the reflections of 
the off-diagonal points along the diagonal. Writing $\bar{z}$ for the
reflection of a point $z$, we obtain the formula, 
\begin{equation}
\phi_\X(x):=\frac{1}{4\pi t} \sum_{z \in D(\X)} e^\frac{\|x-z\|_2^2}{4t} - e^\frac{\|x-\bar{z}\|_2^2}{4t},
\label{eqn:scale_space}
\end{equation}
where $t$ is the width of the Gaussian, which is a free parameter of
the construction. 
See Figure~\ref{fig:feature_map_illu} (b) and (c) for an illustration
of a transformation of a persistence diagram to the function $\phi_\X$. 
The induced kernel enjoys several stability properties
and can be evaluated efficiently without explicit construction of the feature
map; see~\cite{Reininghaus2015} for details.

More generally, in this paper, we look at the class of all feature maps
that assign to a mono-filtration $\X$ a function in $L^2(\Dpo)$.
For such a feature map $\phi_{\X}$, we define the following properties:
\begin{itemize}
\item {\em Absolutely boundedness.} There exists a constant $v_1>0$ such that, for any mono-filtration $\X$ of size $n$ and any $x \in \Dpo$, $0\leq\phi_\X(x)\leq v_1 \cdot n$.
\item {\em Lipschitzianity.} There exists a constant $v_2>0$ such that, for any mono-filtration $\X$ of size $n$ and any $x, x' \in \Dpo$, $|\phi_\X(x)- \phi_\X(x')|\leq v_2 \cdot n \cdot \|x-x'\|_2$.
\item {\em Internal stability.} There exists a constant $v_3>0$ such that, for any pair of mono-filtrations $\X, \Y$ of size $n$ and any $x \in \Dpo$, $|\phi_\X(x)- \phi_\Y(x)|\leq v_3 \cdot n \cdot d_B(\X, \Y)$.
\item {\em Efficiency.} For any $x \in \Dpo$, $\phi_\X(x)$ can be computed in polynomial time  in the size of $\X$,
that is, in $O(n^k)$ for some $k\geq 0$.
\end{itemize}
It can be verified easily that the scale-space feature map from above
satisfies all these properties. The same is true, for instance, if the Gaussian
peaks are replaced by linear peaks (that is, replacing the Gaussian kernel in (\ref{eqn:scale_space}) by a triangle kernel).

\section{A feature map for multi-parameter persistent homology}
\label{sec:definition}

Let $\phi$ be a feature map (such as the scale-space kernel) that assigns to a mono-filtration a function in $L^2(\Dpo)$. Starting from $\phi$, we construct a feature map $\Phi$ on the set of all bi-filtrations $\Omega$ that has values in a Hilbert space. 

The feature map $\Phi$ assigns to a bi-filtration $\filt$ a function $\Phi_\filt:\Dpt\to\R$.
We set
\[\Dpt:=\left\{(p,q)\mid p\in\R^2, q\in\R^2, p<q\right\}\]
as the set of all pairs of points where the first point is smaller than the second one.
$\Dpt$ can be interpreted naturally as a subset of $\R^4$, 
but we will usually consider elements of $\Dpt$ as pairs of points in $\R^2$.

Fixing $(p,q)\in\Dpt$, let $\ell$ denote the unique slice through these two points.
Along this slice, the bi-filtration gives rise to a mono-filtration $\filt_\ell$, and consequently
a function $\phi_{\filt_\ell}:\Dpo\to\R$ using the considered feature map for mono-filtrations.
Moreover, using the parameterization of the slice
$\ell$ as $b+\lambda\cdot a$ from Section~\ref{sec:preliminaries}, there exist
real values $\lambda_p,\lambda_q$ such that $b+\lambda_p a=p$ and $b+\lambda_q a=q$.
Since $p<q$ and $\lambda_p<\lambda_q$, hence $(\lambda_p,\lambda_q)\in\Dpo$.
We define $\Phi_\filt(p,q)$ to be the weighted function value of $\phi_{\filt_\ell}$ at $(\lambda_p,\lambda_q)$ (see also Figure~\ref{fig:feature_map_illu}), that is, 
\begin{equation}
\Phi_\filt(p,q):=w(p,q)\cdot\phi_{\filt_\ell}(\lambda_p,\lambda_q), 
\label{eqn:feature_map_def}
\end{equation}
where $w(p,q)$ is a weight function $w:\Dpt\to\R$ defined below.

The weight function $w$ has two components. First, let $R$ be a bounded axis-aligned rectangle in $\R^2$; its bottom-left corner coincides with the origin of the coordinate axes.
We define $w$ such that its weight is $0$ if $p$ or $q$ is outside of $R$.
Second, for pairs of points within $R\times R$, we assign a weight depending on the slope
of the induced slices. Formally, let $\ell$ be parameterized as $b+\lambda\cdot a$ as above, and recall that $a$ is
a unit vector with non-negative coordinates. Write $a=(a_1,a_2)$ and set $\hat{\ell}:=\min\{a_1,a_2\}$.
Then, we define
\[
w(p,q):= \chi_{R}(p) \cdot \chi_{R}(q) \cdot \hat{\ell}, 
\]
where $\chi_R$ is the characteristic function of $R$, mapping a point $x$ to $1$ if $x\in R$ and $0$ otherwise. 

The factor $\hat{\ell}$ ensures that slices that are close to being horizontal or vertical attain less importance in the feature map.
The same weight is assigned to slices in the matching distance~\cite{BiasottiCerriFrosini2011}.
$\hat{\ell}$ is not important for obtaining an $L^2$-function, 
but its meaning will become clear 
in the stability results of Section~\ref{sec:stability}.
We also remark that the largest weight is attained for the diagonal slice with a value of  $1/\sqrt{2}$. Consequently,
$w$ is a non-negative function upper bounded by $1/\sqrt{2}$. 

\begin{figure*}%[!ht]
 \begin{center}
\includegraphics[width=1.0\linewidth]{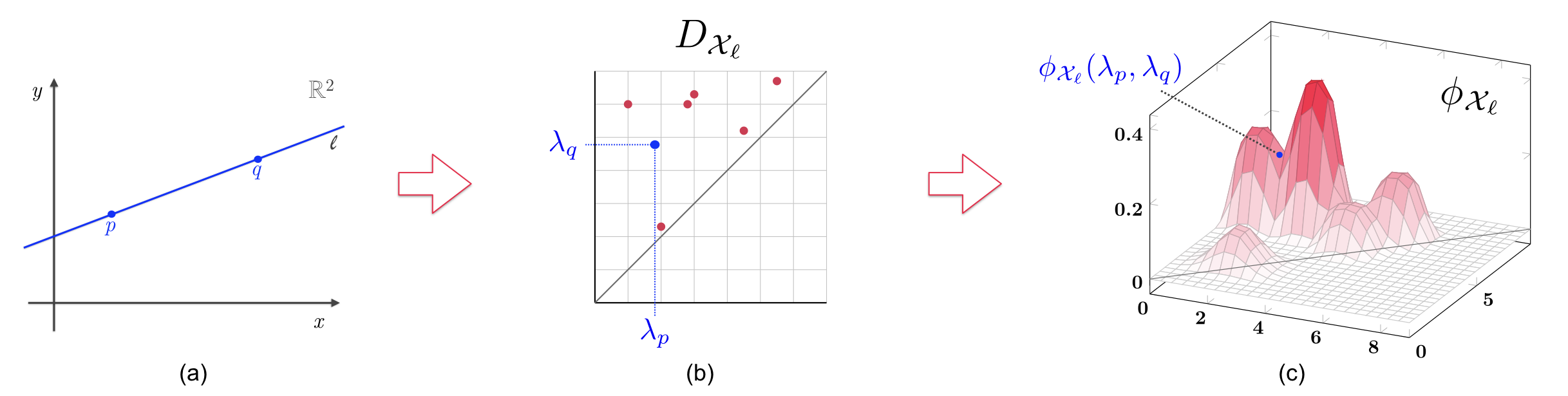}
 \end{center}
 \vspace{-6mm}
\caption{An illustration of the construction of a feature map for multi-parameter persistent homology. (a) Given a bi-filtration $\X$ and a point $(p,q)\in \Dpt$, the line $\ell$ passing through them is depicted and the parameter $\lambda_p$ and $\lambda_q$ computed.  (b) The point $(\lambda_p,\lambda_q)$ is embedded in the persistence diagram of the mono-filtration $\X_\ell$ obtained as the slice of $\X$ along $\ell$. (c) The point $(\lambda_p,\lambda_q)$ is assigned the value $\phi_{\X_\ell}(\lambda_p, \lambda_q)$ via the feature map $\phi$.}
\label{fig:feature_map_illu}
\end{figure*}

To summarize, our map $\Phi$ depends on the choice of an axis-aligned rectangle $R$ and a choice of feature map for mono-filtrations, which itself might
have associated parameters. For instance, using the scale-space feature map
requires the choice of the width $t$ (see (\ref{eqn:scale_space})). 
It is only left to argue that the image of the feature map $\Phi$ is
indeed an $L^2$-function. 

\begin{thm}\label{thm:well-defined}
If $\phi$ is absolutely bounded, then $\Phi_\X$ is in $L^2(\Dpt)$.
\end{thm}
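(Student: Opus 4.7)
The plan is to prove $\Phi_\X\in L^2(\Dpt)$ via a crude pointwise bound combined with a finite-measure support argument: the hypotheses force $\Phi_\X$ to be a uniformly bounded function with support of finite 4-dimensional Lebesgue measure, so square-integrability is immediate. The only genuinely nontrivial point will be measurability.

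First I would establish the pointwise bound. Let $n$ denote the size of $\X$. For any $(p,q)\in\Dpt$ defining a slice $\ell$, the induced mono-filtration $\X_\ell$ has size at most $n$, so absolute boundedness of $\phi$ yields $0\leq\phi_{\X_\ell}(\lambda_p,\lambda_q)\leq v_1\cdot n$. The weight satisfies $0\leq w(p,q)\leq 1/\sqrt{2}$, since $\chi_R\leq 1$ and $\hat{\ell}=\min\{a_1,a_2\}\leq 1/\sqrt{2}$ for any non-negative unit vector $a=(a_1,a_2)\in\R^2$. Combining these two bounds gives $0\leq\Phi_\X(p,q)\leq v_1 n/\sqrt{2}$ on all of $\Dpt$. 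Next I would localize the support: by construction $w(p,q)=0$ whenever $p\notin R$ or $q\notin R$, so $\Phi_\X$ is supported in $(R\times R)\cap\Dpt$, a subset of $\R^4$ of Lebesgue measure at most $\vol(R)^2<\infty$. A one-line estimate then closes the argument:
$$\int_{\Dpt}\Phi_\X^2\,d\mu\;\leq\;\left(\frac{v_1 n}{\sqrt{2}}\right)^{\!2}\!\cdot\vol(R)^2\;=\;\frac{v_1^2 n^2\,\vol(R)^2}{2}\;<\;\infty.$$

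The main obstacle I anticipate is justifying that $\Phi_\X$ is Lebesgue-measurable on $\Dpt$, which the estimate above tacitly assumes. The natural route is to exploit tameness: since $\X$ has only finitely many critical points, the map sending $(p,q)$ to the multiset of birth--death coordinates of the persistence diagram of $\X_\ell$ is piecewise continuous in $(p,q)$. Concretely, one partitions $\Dpt$ into finitely many semi-algebraic cells, determined by which critical point of $\X$ is responsible for the appearance or merging of each homology class on the slice through $(p,q)$; inside each cell the diagram points depend continuously on $(p,q)$ via the projection formulas illustrated in Figure~\ref{fig:slice}. For any reasonable underlying mono-filtration feature map $\phi$ (e.g., the scale-space kernel, built from Gaussian peaks whose output depends continuously on the diagram), $\Phi_\X$ is then continuous on each cell and hence measurable on $\Dpt$. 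This piecewise-continuity bookkeeping is the only delicate ingredient; the actual $L^2$-bound is essentially just $\text{sup norm} \cdot \text{measure of support}$.
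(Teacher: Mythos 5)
Your argument matches the paper's proof: bound $\Phi_\X$ pointwise by $v_1 n/\sqrt{2}$ using absolute boundedness, the slice-size bound, and $w\leq 1/\sqrt{2}$, then observe the support lies in $R\times R$ and conclude $\int\Phi_\X^2 \leq (v_1 n/\sqrt{2})^2\vol(R)^2<\infty$. Your additional remark on measurability addresses a point the paper silently assumes; the piecewise-continuity sketch via tameness is the right idea for closing that gap, though you leave it at the level of a plausible outline rather than a complete argument.
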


\begin{proof}
Let $\filt$ be a bi-filtration of size $n$.
As mentioned earlier, each slice $\filt_\ell$ is of a size at most $n$.
By absolute boundedness and the fact that the weight function is upper bounded
by $\frac{1}{\sqrt{2}}$,
it follows that $|\Phi_\filt(p,q)|\leq \frac{v_1 n}{\sqrt{2}}$ for all $(p,q)$.
Since the support of $\Phi_\filt$ is compact ($R\times R$), 
the integral of $\Phi_\X^2$ over $\Dpt$ is finite, being absolutely bounded and
compactly supported.
\end{proof}

Note that Theorem~\ref{thm:well-defined} remains true even without restricting the weight function to $R$, provided we consider a weight function that is square-integrable over $\Dpt$. 
We skip the (easy) proof.

\section{Stability}
\label{sec:stability}

An important and desirable property for a kernel is its stability. In general, stability means that small perturbations in the input data imply small perturbations in the output data. In our setting, small changes between multi-filtrations (with respect to matching distance) should not induce large changes in their corresponding feature maps (with respect to $L^2$ distance). 

Adopted to our notation, the matching distance is defined as
\[
d_{match}(\X,\Y) = \sup_{\ell\in\L}\left( \hat{\ell}\cdot d_B(\X_\ell,\Y_\ell)\right),
\]
where $\L$ is the set of non-vertical lines with positive slope~\cite{Biasotti2008}.

\begin{thm}\label{thm:stability-match}
Let $\X$ and $\Y$ be two bi-filtrations.
If $\phi$ is absolutely bounded and internally stable, we have 
$$\|\Phi_{\X}-\Phi_{\Y}\|_{L^2}\leq C\cdot n\cdot\mathrm{area}(R)\cdot d_{match}(\X,\Y),$$
for some constant $C$.
\end{thm}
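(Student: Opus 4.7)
The plan is a direct pointwise estimate followed by integration over the (compact) support of the weight function. The crucial observation is that the factor $\hat{\ell}$ baked into the definition of the weight $w$ is precisely what converts the bottleneck distance along a slice into the matching distance; once this is noticed, the theorem is essentially bookkeeping.

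First, I would fix an arbitrary pair $(p,q)\in\Dpt$, let $\ell$ denote the unique slice through them, and write
\begin{equation*}
\Phi_{\X}(p,q)-\Phi_{\Y}(p,q)=w(p,q)\,\bigl(\phi_{\X_\ell}(\lambda_p,\lambda_q)-\phi_{\Y_\ell}(\lambda_p,\lambda_q)\bigr).
\end{equation*}
Since any slice of a bi-filtration of size at most $n$ is itself a mono-filtration of size at most $n$, internal stability of $\phi$ bounds the parenthesized difference by $v_3\cdot n\cdot d_B(\X_\ell,\Y_\ell)$.

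Next, I would unfold the weight as $w(p,q)=\chi_R(p)\,\chi_R(q)\,\hat{\ell}$ and use the definition of the matching distance to obtain $\hat{\ell}\cdot d_B(\X_\ell,\Y_\ell)\leq d_{match}(\X,\Y)$. Combining these two estimates yields the pointwise bound
\begin{equation*}
|\Phi_{\X}(p,q)-\Phi_{\Y}(p,q)|\leq v_3\cdot n\cdot d_{match}(\X,\Y)\cdot\chi_R(p)\,\chi_R(q).
\end{equation*}

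Finally, squaring and integrating over $\Dpt\subset\R^4$ collapses the region of integration to $R\times R$, whose four-dimensional Lebesgue measure is $\mathrm{area}(R)^2$. Taking a square root produces the claimed inequality with $C=v_3$. I do not anticipate any real obstacle: the only conceptually delicate point is the observation, already anticipated when choosing $w$, that incorporating the slope-dependent factor $\hat{\ell}$ into the weight is exactly what is needed to make $d_{match}$ emerge from the slice-wise bottleneck distance. Routine bounds (absolute boundedness, $\hat{\ell}\leq 1/\sqrt{2}$) are not even required here, since the $\hat{\ell}$ factor is consumed entirely by the matching distance.
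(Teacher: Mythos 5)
Your proof is correct and follows essentially the same route as the paper: internal stability bounds the pointwise difference by $v_3\,n\,d_B(\X_\ell,\Y_\ell)$, the $\hat{\ell}$ factor in the weight converts the slice-wise bottleneck distance into $d_{match}$, and integration over the compact support yields the result. The only minor imprecision is that the relevant region of integration is $\Dpt\cap(R\times R)$, whose volume is $\tfrac14\mathrm{area}(R)^2$ rather than $\mathrm{area}(R)^2$; your upper bound by the latter is still valid and merely costs a harmless factor of $2$ in the constant $C$.
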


\begin{proof}
Absolute boundedness ensures that the left-hand side is well-defined
by Theorem~\ref{thm:well-defined}. Now we use the definition of $\|\cdot\|_{L^2}$ and the internal stability of $\phi$ to obtain
\begin{eqnarray*}
&\|\Phi_{\X}-\Phi_{\Y}\|_{L^2}^2\\
=& \underset{\Dpt}{\int} \left|w(p,q)\cdot\phi_{\X_{\ell}}(\lambda_p,\lambda_q)-w(p,q)\cdot\phi_{\Y_{\ell}}(\lambda_p,\lambda_q) \right|^2d\mu\\
\leq & \underset{\Dpt}{\int} \left( w(p,q)\cdot v_3 \cdot n\cdot d_B(\X_\ell,\Y_\ell)\right)^2d\mu\\
= & (v_3\cdot n)^2\underset{\Dpt}{\int} (w(p,q) \cdot d_B(\X_\ell,\Y_\ell))^2d\mu
\end{eqnarray*}
Since $w(p,q)$ is zero outside $R\times R$, the integral does not change when restricted to $\Dpt\cap(R\times R)$.
Within this set, $w(p,q)$ simplifies to $\hat{\ell}$, with $\ell$ the line through $p$ and $q$. Hence, we can further bound
\begin{eqnarray*}
= & (v_3\cdot n)^2\underset{\Dpt\cap (R\times R)}{\int} (\hat{\ell} \cdot d_B(\X_\ell,\Y_\ell))^2d\mu\\
\leq & (v_3\cdot n)^2\underset{\Dpt\cap (R\times R)}{\int} {\underbrace{\sup_{\ell\in\L} \left( \hat{\ell}\cdot d_B(\X_\ell,\Y_\ell)\right)}_{=d_{match}(\X,\Y)}}^2d\mu\\
= & \left(v_3\cdot n \cdot d_{match}(\X,\Y)\right)^2 \underset{\Dpt\cap (R\times R)}{\int}  1 d\mu.
\end{eqnarray*}
The claimed inequality follows by noting that the final integral is equal to $\frac{1}{4}\textnormal{area}(R)^2$.
\end{proof}

As a corollary, we get the the same stability statement with respect to interleaving distance instead of matching distance~\cite[Thm.1]{Landi2014}. 
Furthermore, we obtain a stability bound for sublevel set bi-filtrations
of functions $X\to\R^2$~\cite[Thm.4]{Biasotti2008}:

\begin{coroll}\label{cor:stability}
Let $F,G:X\to\R^2$ be two functions that give rise to sublevel set bi-filtrations $\X$ and $\Y$, respectively.
If $\phi$ is absolutely bounded and internally stable, we have
$$\|\Phi_{\X}-\Phi_{\Y}\|_{L^2}\leq C\cdot n\cdot\mathrm{area}(R)\cdot \|F-G\|_{\infty},$$
for some constant $C$.
\end{coroll}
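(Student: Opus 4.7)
The plan is to apply Theorem~\ref{thm:stability-match} and reduce the claim to the slice-wise inequality
\[
d_{match}(\X,\Y)\leq \|F-G\|_{\infty}.
\]
Once this is established, Corollary~\ref{cor:stability} follows immediately with the same constant $C$ as in Theorem~\ref{thm:stability-match}. The work therefore lies in translating a sup-norm bound on the vector-valued functions $F,G$ into a slope-weighted bottleneck bound on the induced slices.

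First, I would parameterize a generic slice $\ell\in\L$ as $b+\lambda\cdot a$ with $a=(a_1,a_2)$ a unit vector of strictly positive coordinates, so that $\hat{\ell}=\min\{a_1,a_2\}>0$. For a simplex $\sigma$ with unique critical value $F(\sigma)=(F_1(\sigma),F_2(\sigma))$ in $\X^F$, the condition $F(\sigma)\leq b+\lambda\cdot a$ splits into the two scalar inequalities $F_i(\sigma)\leq b_i+\lambda a_i$, so the critical value of $\sigma$ in $\X^F_\ell$ is
\[
f_\ell(\sigma):=\max\!\left\{\frac{F_1(\sigma)-b_1}{a_1},\,\frac{F_2(\sigma)-b_2}{a_2}\right\},
\]
and $g_\ell(\sigma)$ is defined analogously for $G$. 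In particular, $\X^F_\ell$ and $\X^G_\ell$ are sublevel set mono-filtrations of the scalar functions $f_\ell,g_\ell:X\to\R$.

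Second, since the maximum of two reals is $1$-Lipschitz in each argument, I obtain, for every simplex $\sigma$,
\[
|f_\ell(\sigma)-g_\ell(\sigma)|\leq \max_{i=1,2}\frac{|F_i(\sigma)-G_i(\sigma)|}{a_i}\leq \frac{\|F-G\|_{\infty}}{\hat{\ell}},
\]
and taking the supremum over simplices yields $\|f_\ell-g_\ell\|_{\infty}\leq \|F-G\|_{\infty}/\hat{\ell}$. Applying the classical mono-filtration stability bound (\ref{eqn:mono_stability}) to $f_\ell$ and $g_\ell$ then gives $d_B(\X_\ell,\Y_\ell)\leq \|F-G\|_{\infty}/\hat{\ell}$. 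Multiplying by $\hat{\ell}$ cancels the slope factor, and taking the supremum over $\ell\in\L$ delivers the desired bound $d_{match}(\X,\Y)\leq \|F-G\|_{\infty}$.

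The only delicate point I anticipate is justifying the max-projection formula for $f_\ell$, but this is immediate from the chain of equalities $\X^F_\ell(\lambda)=\X^F(b+\lambda a)=\{\sigma:F(\sigma)\leq b+\lambda a\}=\{\sigma:f_\ell(\sigma)\leq \lambda\}$; the restriction of $\L$ to lines of positive slope ensures $\hat{\ell}>0$, so no division by zero occurs. Everything else is a routine chaining of Theorem~\ref{thm:stability-match}, (\ref{eqn:mono_stability}), and the Lipschitz property of the maximum.
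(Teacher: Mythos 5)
Your proof is correct and follows the same route the paper intends: apply Theorem~\ref{thm:stability-match} and reduce to the inequality $d_{match}(\X,\Y)\leq\|F-G\|_{\infty}$. The paper leaves the corollary without an explicit proof, citing \cite[Thm.\ 4]{Biasotti2008} for that inequality, whereas you derive it directly from the max-projection formula $f_\ell(\sigma)=\max_i (F_i(\sigma)-b_i)/a_i$, the $1$-Lipschitzness of $\max$, and the mono-filtration stability bound (\ref{eqn:mono_stability}); this is a correct, self-contained version of the cited result and the slope factor $\hat{\ell}$ cancels exactly as you say.
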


We remark that the appearance of $n$ in the stability bound is not desirable as the bound
worsens when the complex size increases (unlike, for instance, the bottleneck stability bound
in (\ref{eqn:mono_stability}), which is independent of $n$). The factor of $n$ comes
from the internal stability property of $\phi$, so we have to strengthen this condition on $\phi$.
However, we show that such an improvement is impossible for a large class of ``reasonable''
feature maps.

For two bi-filtrations $\X,\Y$  we define  $\X\oplus \Y$ by setting $(\X\oplus\Y)(p):=\X(p)\sqcup\Y(p)$ for all $p\in\R^2$.
A feature map $\Phi$ is \emph{additive} if $\Phi_{\X\oplus\Y}=\Phi(\X)+\Phi(\Y)$ for all bi-filtrations $\X,\Y$.
$\Phi$ is called \emph{non-trivial} if there is a bi-filtration $\X$ such that $\|\Phi\|_{L^2}\neq 0$.
Additivity and non-triviality for feature maps $\phi$ on mono-filtrations is defined in the analogous way.
Note that, for instance, the scale space feature map is additive.
Moreover, because $(\X\oplus\Y)_{\ell}=\X_\ell\oplus\Y_\ell$ for every slice $\ell$,
a feature map $\Phi$ is additive if the underlying $\phi$ is additive.

For mono-filtrations, no additive, non-trivial feature map $\phi$ can satisfy
\[\|\phi_\X -\phi_\Y\|\leq C\cdot n^\delta\cdot d_B(\X,\Y)\]
with $\X,\Y$ mono-filtrations and $\delta\in[0,1)$;
the proof of this statement is implicit in~\cite[Thm 3]{Reininghaus2015}.
With similar ideas, we show that the same result holds in the multi-parameter case.

\begin{thm}\label{thm:additivity}
If $\Phi$ is additive and there exists $C>0$ and $\delta\in[0,1)$ such that 
$$\|\Phi_{\X}-\Phi_{\Y}\|_{L^2}\leq C\cdot n^\delta\cdot d_{match}(\X,\Y)$$
for all bi-filtrations $\X$ and $\Y$, then $\Phi$ is trivial.
\end{thm}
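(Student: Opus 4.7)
The plan is to adapt the doubling/rescaling argument of Reininghaus et al.\ from the one-parameter case. Assume toward a contradiction that $\Phi$ is additive, non-trivial, and satisfies the stated bound with $\delta\in[0,1)$. I want to exhibit a sequence of bi-filtrations whose sizes grow linearly in a parameter $N$, whose feature-map $L^2$ norms also grow linearly in $N$, while their matching distance to a fixed reference bi-filtration stays bounded. Since $N^{1-\delta}\to\infty$ as $N\to\infty$, this contradicts the hypothesis.

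As a preliminary, additivity forces $\Phi_{\mathbf{0}}=0$ for the empty bi-filtration $\mathbf{0}$: from $\X\oplus\mathbf{0}=\X$ and additivity, $\Phi_{\X}=\Phi_{\X}+\Phi_{\mathbf{0}}$ in $L^2$. By non-triviality, I choose a bi-filtration $\X_0$ of size $n_0$ with $\|\Phi_{\X_0}\|_{L^2}>0$ and, crucially, with $d_{match}(\X_0,\mathbf{0})<\infty$; such an $\X_0$ can be produced, for instance, from a cycle that appears and is later filled in, so that every slice yields only finite-persistence $H_1$-bars. For each $N\in\N$ define $\X_N:=\X_0^{\oplus N}$, the bi-filtration obtained as $N$ vertex-disjoint copies of $\X_0$. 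Then $\X_N$ has size $Nn_0$ and, by additivity, $\|\Phi_{\X_N}\|_{L^2}=N\cdot\|\Phi_{\X_0}\|_{L^2}$.

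The key observation is that $d_{match}(\X_N,\mathbf{0})=d_{match}(\X_0,\mathbf{0})$ independently of $N$. Indeed, along any slice $\ell$, $(\X_N)_\ell$ is the disjoint union of $N$ identical copies of $(\X_0)_\ell$, whose persistence diagram consists of $N$ duplicates of each point in $D((\X_0)_\ell)$. The bottleneck distance to $\emptyset$ is a supremum of $L^\infty$-distances to the diagonal over the points of the diagram, so duplicating points leaves it unchanged. Taking the weighted supremum over slices preserves the equality. Substituting into the hypothesis,
$$N\cdot\|\Phi_{\X_0}\|_{L^2}\;\leq\;C\cdot(Nn_0)^{\delta}\cdot d_{match}(\X_0,\mathbf{0}),$$
which rearranges to $N^{1-\delta}\leq K$ for a constant $K$ independent of $N$, impossible for $\delta<1$.

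The main obstacle is securing a non-trivial $\X_0$ with finite $d_{match}(\X_0,\mathbf{0})$; otherwise the hypothesis is vacuous for the comparisons above. In the pathological case where $\Phi$ vanishes on every bi-filtration whose slices have no essential classes, I would replace the reference $\mathbf{0}$ by a shifted copy $\X_0^{(\epsilon)}$ of $\X_0$, translating every critical point by $(\epsilon,\epsilon)$. Standard stability gives $d_{match}(\X_N,(\X_0^{(\epsilon)})^{\oplus N})=d_{match}(\X_0,\X_0^{(\epsilon)})=O(\epsilon)$, whereas additivity yields
$$\bigl\|\Phi_{\X_N}-\Phi_{(\X_0^{(\epsilon)})^{\oplus N}}\bigr\|_{L^2}=N\cdot\bigl\|\Phi_{\X_0}-\Phi_{\X_0^{(\epsilon)}}\bigr\|_{L^2},$$
and for sufficiently small $\epsilon$ the right-hand factor is non-zero (one uses that $\epsilon\mapsto\Phi_{\X_0^{(\epsilon)}}$ cannot be constant without collapsing $\Phi_{\X_0}$ to zero, via a continuity/limiting argument on the underlying feature map $\phi$). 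The identical rescaling then produces the same contradiction, completing the argument.
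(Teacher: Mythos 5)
Your core argument matches the paper's proof exactly: form $N$ disjoint copies of a non-trivial $\X_0$, use additivity to get $\|\Phi_{\X_0^{\oplus N}}\|_{L^2}=N\,\|\Phi_{\X_0}\|_{L^2}$, observe that duplicating points in a persistence diagram leaves the bottleneck distance to the empty diagram unchanged so that $d_{match}(\X_0^{\oplus N},\mathcal{O})=d_{match}(\X_0,\mathcal{O})$, and let $N\to\infty$. That part is correct and is precisely what the paper does.

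The additional machinery you introduce is either unnecessary or not rigorous. First, the worry that $d_{match}(\X_0,\mathcal{O})$ might be infinite does not arise under the paper's conventions: the feature maps $\phi$ are defined on $\Dpo$, i.e.\ persistence diagrams are taken to consist of finite off-diagonal points only, and for a tame bi-filtration of a finite complex the weighted quantity $\hat{\ell}\,d_B((\X_0)_\ell,\emptyset)$ stays bounded as $\ell$ ranges over all slices (this is exactly what the $\hat{\ell}$ weight buys you). So \emph{any} $\X_0$ with $\|\Phi_{\X_0}\|_{L^2}>0$ serves, and no special construction (your ``cycle that appears and is later filled in'') is needed. Worse, if that worry \emph{were} real, the branch of your proof that hand-picks such an $\X_0$ would have a gap: non-triviality only gives you \emph{some} $\X_0$ with $\|\Phi_{\X_0}\|_{L^2}>0$, and you cannot guarantee your carefully constructed cycle filtration is it. Second, your ``plan B'' is not a complete argument. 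From additivity and the stability hypothesis alone, nothing forces $\Phi_{\X_0^{(\epsilon)}}\neq\Phi_{\X_0}$; an additive, translation-invariant feature map satisfies $\Phi_{\X_0^{(\epsilon)}}=\Phi_{\X_0}$ for every $\epsilon$, and your appeal to a ``continuity/limiting argument on $\phi$'' invokes properties of $\phi$ (decay, localization inside $R$) that are not among the theorem's hypotheses. In short: the main line of your proof is right and is the paper's proof; drop the finiteness detour and the shifted-copy backup, which do not hold up as stated.
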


\begin{proof}
Assume to the contrary that there exists a bi-filtration $\X$ such that $\|\Phi_{\X}\|_{L^2}>0$. Then, writing $\mathcal{O}$ for the empty bi-filtration, by additivity we get $\|\Phi_{\sqcup_{i=1}^n\X}-\Phi_{\mathcal{O}}\|_{L^2}=n\|\Phi_{\X}-\Phi_{\mathcal{O}}\|_{L^2}>0$. On the other hand, $d_{match}(\sqcup_{i=1}^n\X,\mathcal{O})=d_{match}(\X,\mathcal{O})$.
Hence, with $C$ and $\delta$ as in the statement of the theorem, 
\begin{align*}
\frac{\|\Phi_{\sqcup_{i=1}^n\X}-\Phi_{\mathcal{O}}\|_{L^2}}{C\cdot n^\delta\cdot d_{match}(\sqcup_{i=1}^n\X,\mathcal{O})} &=\frac{n\|\Phi_{\X}-\Phi_{\mathcal{O}}\|_{L^2}}{C\cdot n^\delta\cdot d_{match}(\X,\mathcal{O})}\\
&=n^{1-\delta} \frac{\|\Phi_{\X}-\Phi_{\mathcal{O}}\|_{L^2}}{C\cdot d_{match}(\X,\mathcal{O}) }\stackrel{n\to\infty}{\longrightarrow} \infty,
\end{align*}
a contradiction.
\end{proof}

\section{Approximability}
\label{sec:approx}

We provide an approximation algorithm to compute
the kernel of two bi-filtrations $\X$ and $\Y$ up to any absolute error $\eps > 0$.
Recall that our feature map $\Phi$ depends on the choice
of a bounding box $R$. In this section, we assume $R$ 
to be the unit square $[0,1]\times [0,1]$ for simplicity.
We prove the following theorem that shows our kernel construction admits an efficient approximation scheme that is polynomial in $1/\eps$ and the size of the bi-filtrations. 

\begin{thm}
\label{thm:approx}
Assume $\phi$ is absolutely bounded, Lipschitz, internally stable
and efficiently computable.
Given two bi-filtrations $\X$ and $\Y$ of size $n$ and $\eps>0$, we can compute
a number $r$ such that $r\leq\langle \X,\Y\rangle_\Phi\leq r+\eps$
in polynomial time in $n$ and $1/\eps$.
\end{thm}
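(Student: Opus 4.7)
The plan is to reduce $\langle \X,\Y\rangle_\Phi=\int_{\Dpt}\Phi_\X \Phi_\Y\,d\mu$ to an integral over the bounded $4$-dimensional region $B:=(R\times R)\cap\Dpt$, subdivide $B$ into small axis-aligned boxes, and on each box approximate the integrand by its value at a sample point while certifying the error via an explicit modulus of continuity. The integrand $\Phi_\X\Phi_\Y$ is pointwise bounded by $M:=\tfrac{1}{2}v_1^2 n^2$ thanks to absolute boundedness of $\phi$ and the bound $|w|\leq 1/\sqrt{2}$, which suffices to control the contribution of partially covered boxes near $\partial B$.

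The key technical step is a quantitative Lipschitz-type bound of the form
\[
|\Phi_\X(p,q)-\Phi_\X(p',q')|\leq K(n)\cdot\|(p,q)-(p',q')\|_2
\]
valid on $B$, where $K(n)$ is polynomial in $n$. To establish this, I would decompose the displacement through the identity
\[
\Phi_\X(p,q)-\Phi_\X(p',q')=w(p,q)\bigl(\phi_{\X_\ell}(\lambda_p,\lambda_q)-\phi_{\X_{\ell'}}(\lambda_{p'},\lambda_{q'})\bigr)+\bigl(w(p,q)-w(p',q')\bigr)\phi_{\X_{\ell'}}(\lambda_{p'},\lambda_{q'})
\]
and bound three contributions separately. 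First, $w(p,q)-w(p',q')$ is Lipschitz in $(p,q)$ on $B$ by direct computation (the map sending two distinct points to $\hat{\ell}$ is smooth away from coincident arguments, and on $B$ the distance between $p$ and $q$ is bounded away from $0$ in the relevant strip; on the remaining part one uses the uniform bound $|\phi_{\X_{\ell'}}|\le v_1 n$). Second, for fixed slice $\ell$, the Lipschitzianity of $\phi$ yields $|\phi_{\X_\ell}(\lambda_p,\lambda_q)-\phi_{\X_\ell}(\lambda_{p'},\lambda_{q'})|\leq v_2 n\,\|(\lambda_p,\lambda_q)-(\lambda_{p'},\lambda_{q'})\|_2$, and $(p,q)\mapsto(\lambda_p,\lambda_q)$ is itself Lipschitz on $B$. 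Third, the dependence on the slice is controlled by internal stability: $|\phi_{\X_\ell}(x)-\phi_{\X_{\ell'}}(x)|\leq v_3 n\,d_B(\X_\ell,\X_{\ell'})$, and a geometric argument (inspecting Figure~\ref{fig:slice}) bounds $d_B(\X_\ell,\X_{\ell'})$ by a constant multiple of the change in the parameters determining $\ell$ when the slices lie in $B$, because each critical value on the slice is a piecewise-linear function of $\ell$ with controlled slope.

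Given such a $K(n)$, I would set the side length $\delta$ of the boxes so that the per-box oscillation of $\Phi_\X\Phi_\Y$ is at most $\eps/\vol(B)$; since $\Phi_\X\Phi_\Y$ is the product of two $O(n)$-bounded $K(n)$-Lipschitz functions, the oscillation is at most $C\,n\,K(n)\,\delta$ for a constant $C$, so $\delta=\Theta(\eps/(n K(n)))$ suffices, giving $O((nK(n)/\eps)^4)$ boxes. For each box $B_i$ lying entirely inside $B$, evaluate $\Phi_\X(c_i)\Phi_\Y(c_i)$ at the center $c_i$ using the efficiency of $\phi$ (compute the slice, the induced mono-filtrations of size at most $n$, and the values of the two feature maps in polynomial time), and take $r_i=\vol(B_i)(\Phi_\X(c_i)\Phi_\Y(c_i)-\tfrac{1}{2}\cdot\textnormal{(oscillation bound)})$ as a certified lower bound. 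Boxes crossing $\partial B$ are handled by the uniform bound $M$ against $\vol(B_i)$, whose total contribution is $O(\delta\cdot M)$ and hence also negligible after absorbing into the choice of $\delta$. Summing the lower bounds $r=\sum r_i$ yields a value with $r\leq\langle\X,\Y\rangle_\Phi\leq r+\eps$ in time polynomial in $n$ and $1/\eps$.

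The main obstacle I anticipate is the third bound, controlling $d_B(\X_\ell,\X_{\ell'})$ uniformly when $\ell,\ell'$ are close: the projection that defines the critical value of a simplex on a slice depends on which side of the critical point the slice passes, so one must argue that small perturbations of the slice induce only small changes in each slice-critical value, and then apply the mono-filtration stability theorem~(\ref{eqn:mono_stability}) simplex-wise. Making this estimate uniform over all slices meeting $B$ (and in particular, avoiding a blow-up for nearly horizontal or vertical slices) is where the restriction to $R$ and the weight $\hat{\ell}$ become essential, mirroring their role in the stability analysis of Section~\ref{sec:stability}.
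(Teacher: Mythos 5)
Your overall architecture — split $R\times R$ into small boxes, bound the integrand per box by sampling at a center and certifying the per-box oscillation, use the absolute bound to absorb boundary boxes — matches the paper's algorithm. However, the key technical claim on which your plan hinges, namely a uniform Lipschitz bound
\[
|\Phi_\X(p,q)-\Phi_\X(p',q')|\leq K(n)\cdot\|(p,q)-(p',q')\|_2
\]
over all of $B=(R\times R)\cap\Dpt$ with $K(n)$ polynomial in $n$, is not attainable, and this is exactly the obstruction the paper's proof is structured to avoid. The reason is the third term of your decomposition: the only available control on $d_B(\X_\ell,\X_{\ell'})$ for nearby slices (Lemma~\ref{lemma:landi}, from~\cite{Landi2014}) carries a factor $1/(\hat\ell\,\hat{\ell'})$ in the denominator, which diverges as the slices tend to horizontal or vertical. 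The weight $w=\hat\ell$ in the definition of $\Phi$ supplies only a \emph{single} factor of $\hat\ell$; it does not cancel the $1/(\hat\ell\,\hat{\ell'})$ blow-up the way it cancels the single factor in the matching distance in Section~\ref{sec:stability}. Concretely, for $(p,q)$ with $q_2-p_2$ tiny relative to $q_1-p_1$ (a nearly horizontal slice), a small perturbation of $(p,q)$ moves the slice's persistence diagram by an amount of order $(\text{perturbation})/\hat\ell^2$, and after weighting by $\hat\ell$ the resulting change in $\Phi_\X$ is still of order $(\text{perturbation})/\hat\ell$, which has no uniform polynomial bound in $n$. Note this region is not confined to a thin neighborhood of $\partial(R\times R)$: the set $\{(p,q): q_1-p_1\text{ or }q_2-p_2\text{ small}\}$ is a full-dimensional subset of $B$, so your plan of treating only boxes crossing $\partial B$ by the trivial bound $M$ does not cover it.

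The paper sidesteps this precisely by \emph{not} asserting a global Lipschitz bound. It computes a per-box-pair ``variation'' $\delta$ (equation~\ref{eqn:variation_bound}) that is allowed to be large or even infinite, and then classifies box pairs at resolution $u=2^{-s}$ into \emph{good} ones (centers at distance $\geq\sqrt u$, every traversing slice has $\hat\ell\geq u^{1/5}$), where the variation bound is $O(n\,u^{0.1})$, and \emph{bad} ones (close, or non-diagonal), where only the trivial $[0,U]$ bound is used. The argument then shows the total four-dimensional volume of the bad box pairs is $O(u^{1/5})$ (Lemmas~\ref{close_lemma} and~\ref{non-diag_lemma}), so their contribution to the interval width vanishes as $s\to\infty$ even though the integrand is not controlled there. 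To repair your proof you would need an analogous quantitative tradeoff: choose a threshold $\tau(u)$ on the minimal weight $\hat\ell$, prove a local Lipschitz bound of size polynomial in $n$ and $1/\tau(u)$ on the region where $\hat\ell\geq\tau(u)$, bound the measure of the complementary region by a positive power of $u$, and balance the two error terms. Without such a split, the claimed box count $O((nK(n)/\eps)^4)$ with a fixed $K(n)$ is unsupported.
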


The proof of Theorem~\ref{thm:approx} will be illustrated in the following paragraphs, postponing most of the technical details to~\ref{sec:computability-appendix}.

\paragraph{Algorithm}
Given two bi-filtrations $\X$ and $\Y$ of size $n$ and $\eps>0$, our goal is to efficiently approximate $\langle \X,\Y\rangle_\Phi$ by some number $r$. 
On the highest level, we compute a sequence of approximation
intervals (with decreasing lengths) $J_1,J_2,J_3,\ldots$, each containing the desired kernel value $\langle \X,\Y\rangle_\Phi$.
The computation terminates as soon as we find some $J_i$ of width at most $\eps$,
in which case we return the left endpoint as an approximation to $r$.

For $s \in \mathbb{N}$ ($\mathbb{N}$ being the set of natural numbers), we compute $J_s$ as follows. We split $R$
into $2^s\times 2^s$ congruent squares (each of side length $2^{-s}$)
which we refer to as \emph{boxes}. 
See Figure~\ref{fig:computation_of_M}(a) for an example when $s=3$.
We call a pair of such boxes
a \emph{box pair}. The integral from (\ref{eqn:kernel_def})
can then be split into a sum of integrals over all $2^{4s}$ box pairs. 
That is, 
\begin{equation*}
\langle \X,\Y\rangle_\Phi = \int_{\Dpt} \Phi_\X\Phi_\Y d\mu
= \sum_{(B_1,B_2)}\int_{\Dpt \cap (B_1\times B_2)}\Phi_\X \Phi_\Y d\mu.
\end{equation*}
For each box pair, we compute an approximation interval for the integral,
and sum them up using interval arithmetic to obtain $J_s$.

\begin{figure}[!h]
\centering
\includegraphics[width=.7\linewidth]{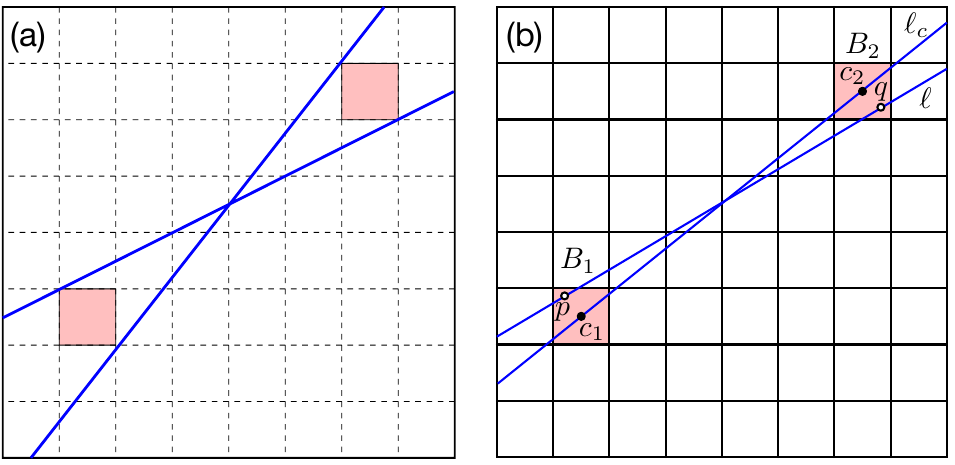}
\caption{(a) The two given slices realize the largest and smallest possible
slope among all slices traversing the pink box pair.  It can be easily seen
that the difference of the unit vector of the center line to one of the unit vectors of these two lines realizes $A$ for the given box pair. (b) Computing variations for the center slice and a traversing slice of a box pair.}
\label{fig:computation_of_M}
\end{figure}

We first give some (almost trivial) bounds for $\langle \X,\Y\rangle_\Phi$. 
Let $(B_1,B_2)$ be a box pair with centers located at $c_1$ and $c_2$, respectively. 
By construction, $\vol(B_1\times B_2) = 2^{-4s}$.
By the absolute boundedness of $\phi$, we have 
\begin{align}
\label{eqn:js}
 \int_{\Dpt \cap (B_1\times B_2)}\Phi_\X \Phi_\Y d\mu &
\le \int_{(B_1\times B_2)} \left(\frac{1}{\sqrt{2}} v_1 n \cdot \frac{1}{\sqrt{2}} v_1 n\right)  d\mu \\
 &= \frac{v_1^2 n^2}{2}\vol(B_1\times B_2) 
 =  \frac{v_1^2 n^2}{2^{4s+1}}, 
\end{align}
where $1/\sqrt{2}$ is the maximal weight. 
Let $U: = \frac{v_1^2 n^2}{2^{4s+1}}$. 
If $c_1 \leq c_2$, then we can choose $[0, U]$ as approximation interval.
Otherwise, if $c_1 \not\leq c_2$, then $\Dpt \cap (B_1\times B_2) = \emptyset$; 
we simply choose $[0,0]$ as approximation interval.

We can derive a second lower and upper bound for $\langle \X,\Y\rangle_\Phi$  as follows.  
We evaluate $\Phi_\X$ and $\Phi_\Y$ at the pair of centers $(c_1,c_2)$, which is possible due to the efficiency hypothesis of $\phi$.
Let $v_\X = \Phi_\X(c_1, c_2)$ and $v_\Y = \Phi_\Y(c_1, c_2)$. 
Then, we compute \emph{variations}
$\delta_\X,\delta_\Y \geq 0$ relative to the box pair, with the property that, 
for any pair $(p,q)\in B_1\times B_2$,
$\Phi_\X(p,q)\in[v_\X-\delta_\X,v_\X+\delta_\X]$,
and $\Phi_\Y(p,q)\in[v_\Y-\delta_\Y,v_\Y+\delta_\Y]$. 
In other words, variations describe how far the value of $\Phi_\X$ (or $\Phi_\Y$) deviates from its value at $(c_1,c_2)$ within $B_1 \times B_2$. 
Combined with the derivations starting in (\ref{eqn:js}), we have for any pair $(p,q)\in B_1\times B_2$, 
\begin{align}
\label{eq:var}
& \max\left\{0,(v_\X-\delta_\X)(v_\Y-\delta_\Y)\right\} \\
\leq & \Phi_\X(p,q)\Phi_\Y(p,q) \\
\leq & \min\left\{\frac{v_1^2 n^2}{2},(v_\X+\delta_\X)(v_\Y+\delta_\Y)\right\}.
\end{align}

By multiplying the bounds obtained in (\ref{eq:var}) by the volume of $\Dpt \cap (B_1\times B_2)$, we get a lower and an upper bound for the integral of $\Phi_\X \Phi_\Y$ over a box pair $(B_1,B_2)$. By summing over all possible box pairs, the obtained lower and upper bounds are the endpoints of $J_s$.

\paragraph{Variations}
We are left with computing the variations relative to a box pair.
For simplicity, we set $\delta:=\delta_\X$
and explain the procedure only for $\X$; the treatment of $\Y$ is similar.

We say that a slice $\ell$ \emph{traverses} $(B_1,B_2)$ if it intersects
both boxes in at least one point.
One such slice is the \emph{center slice}  ${\ell_c}$,
which is the slice through $c_1$ and $c_2$.
See Figure~\ref{fig:computation_of_M}(b) for an illustration. 
We set $D$ to be the maximal bottleneck distance of the center slice
and every other slice traversing the box pair (to be more precise, of
the persistence diagrams along the corresponding slices).
We set $W$ as the maximal difference between the weight of the center slice
and any other slice traversing the box pair, where the weight $w$ is
defined as in Section~\ref{sec:definition}.
Write $\lambda_{c_1}$ for the parameter value of $c_1$ along the center slice.
For every slice $\ell$ traversing the box pair and any point $p\in\ell\cap B_1$, we have a value $\lambda_p$, yielding the parameter value of $p$ along $\ell$.
We define $L_1$ as the maximal difference of $\lambda_p$ and $\lambda_{c_1}$
among all choices of $p$ and $\ell$. We define $L_2$ in the same way for
$B_2$ and set $L:=\max\{L_1,L_2\}$. With these notations, we obtain Lemma~\ref{lemma:up} below. 

\begin{lemma}
\label{lemma:up}
For all $(p,q)\in B_1\times B_2$,
\[\left|\Phi_\X(p,q)-\Phi_\X(c_1,c_2)\right|\leq \frac{v_3 n}{\sqrt{2}} D +v_1n W + v_2 n L.\]
\end{lemma}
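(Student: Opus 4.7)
The plan is to perform a three-step triangle-inequality decomposition, where each step isolates the effect of exactly one of the three quantities $D$, $L$, $W$. Write $\ell$ for the slice through $p,q$ and $\ell_c$ for the center slice, and abbreviate $w:=w(p,q)$, $w_c:=w(c_1,c_2)$. Inserting the intermediates $w\cdot \phi_{\X_{\ell_c}}(\lambda_p,\lambda_q)$ and $w\cdot\phi_{\X_{\ell_c}}(\lambda_{c_1},\lambda_{c_2})$, I split
\[
\Phi_\X(p,q)-\Phi_\X(c_1,c_2) \;=\; T_1 \;+\; T_2 \;+\; T_3,
\]
where $T_1 = w\bigl[\phi_{\X_{\ell}}(\lambda_p,\lambda_q)-\phi_{\X_{\ell_c}}(\lambda_p,\lambda_q)\bigr]$ compares the two slice-mono-filtrations at a common argument, $T_2 = w\bigl[\phi_{\X_{\ell_c}}(\lambda_p,\lambda_q)-\phi_{\X_{\ell_c}}(\lambda_{c_1},\lambda_{c_2})\bigr]$ moves the argument along the fixed slice $\ell_c$, and $T_3 = (w-w_c)\,\phi_{\X_{\ell_c}}(\lambda_{c_1},\lambda_{c_2})$ absorbs the change in weight.

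To bound $T_1$, I would apply internal stability: both $\X_\ell$ and $\X_{\ell_c}$ are slices of $\X$ and so have size at most $n$, and by definition of $D$, $d_B(\X_\ell,\X_{\ell_c})\leq D$; combined with the universal bound $w\leq 1/\sqrt{2}$, this gives $|T_1|\leq \frac{v_3 n}{\sqrt{2}}\,D$. For $T_2$, I apply Lipschitzianity of $\phi$ to the mono-filtration $\X_{\ell_c}$: the arguments $(\lambda_p,\lambda_q),(\lambda_{c_1},\lambda_{c_2})\in\Dpo$ satisfy $\|(\lambda_p-\lambda_{c_1},\lambda_q-\lambda_{c_2})\|_2\leq \sqrt{L_1^2+L_2^2}\leq \sqrt{2}\,L$, so $|T_2|\leq w\cdot v_2 n\cdot\sqrt{2}L\leq v_2 n L$, where again I use $w\leq 1/\sqrt{2}$ to cancel the $\sqrt{2}$. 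Finally, $T_3$ is handled by absolute boundedness, since $|\phi_{\X_{\ell_c}}(\lambda_{c_1},\lambda_{c_2})|\leq v_1 n$ and $|w-w_c|\leq W$ by definition of $W$, giving $|T_3|\leq v_1 n W$. Summing the three bounds yields the claim.

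The main technical subtlety is not the individual estimates, which are all one-line consequences of the hypotheses, but rather ensuring the decomposition makes sense everywhere on $B_1\times B_2$. One must verify that $\ell$ and $\ell_c$ are legitimate elements of $\L$ (non-vertical, positive slope) for every $(p,q)\in \Dpt\cap (B_1\times B_2)$, that the pairs $(\lambda_p,\lambda_q)$ and $(\lambda_{c_1},\lambda_{c_2})$ genuinely lie in $\Dpo$ (this follows from $p<q$ and the fact that moving in the positive direction of $\ell$ strictly increases the parameter), and that the quantities $D$, $L$, $W$ defined as suprema over traversing slices actually apply to $\ell$ in particular. The delicate care, rather than any nontrivial estimate, is therefore in the bookkeeping of what ``traversing the box pair'' means and in the careful balancing of the factor $w\leq 1/\sqrt{2}$ against the $\sqrt{2}$ in the Lipschitz estimate so that the stated constants come out exactly as claimed.
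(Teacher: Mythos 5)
Your proof is correct and matches the paper's approach: a three-term telescoping decomposition of $\Phi_\X(p,q)-\Phi_\X(c_1,c_2)$, with the three pieces bounded respectively by internal stability (giving the $D$ term), Lipschitzianity (giving the $L$ term), and absolute boundedness (giving the $W$ term). The only immaterial difference is the placement of the weight-change term---the paper inserts it in the middle of the telescope while you place it last---but in both variants the coefficient $w\leq 1/\sqrt{2}$ is used to absorb the $\sqrt{2}$ arising from converting the max-type bound on $(\lambda_p-\lambda_{c_1},\lambda_q-\lambda_{c_2})$ into an $L^2$ bound, so the resulting constants agree exactly with the statement.
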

\begin{proof}

Plugging in (\ref{eqn:feature_map_def}) and using triangle inequality, we obtain
\begin{align*}
&\left|\Phi_\X(p,q)-\Phi_\X(c_1,c_2)\right| \\
= & \left| \hat{\ell}\phi_{\X_{\ell}}(\lambda_p,\lambda_q) - \hat{{\ell_c}}\phi_{\X_{{\ell_c}}}(\lambda_{c_1},\lambda_{c_2})\right|\\
\leq& \hat{\ell}\left|\phi_{\X_{\ell}}(\lambda_p,\lambda_q)-\phi_{\X_{{\ell_c}}}(\lambda_p,\lambda_q)\right|
+\phi_{\X_{{\ell_c}}}(\lambda_p,\lambda_q)\left|\hat{\ell}-\hat{{\ell_c}}\right| \\
&+\hat{{\ell_c}}\left|\phi_{\X_{{\ell_c}}}(\lambda_p,\lambda_q)-\phi_{\X_{{\ell_c}}}(\lambda_{c_1},\lambda_{c_2})\right|
\end{align*}
and bound the three parts separately. The first summand is upper bounded
by $\frac{v_3 n D}{\sqrt{2}}$ because of internal stability of the feature map $\phi$ and because $\hat{\ell}\leq\frac{1}{\sqrt{2}}$
for any slice $\ell$. The second summand is upper bounded by
$v_1 n W$ by the absolute boundedness of $\phi$.
The third summand is bounded by $v_2 n L$, because
$\|(\lambda_p,\lambda_q)-(\lambda_{c_1},\lambda_{c_2})\|_2
\leq \sqrt{2}\|(\lambda_p,\lambda_q)-(\lambda_{c_1},\lambda_{c_2})\|_{\infty}
\leq L$ and by $\phi$ being Lipschitz, $\left|\phi_{\X_{{\ell_c}}}(\lambda_p,\lambda_q)-\phi_{\X_{{\ell_c}}}(\lambda_{c_1},\lambda_{c_2})\right|\leq \sqrt{2}v_2 n L$,
and $\hat{\ell}\leq\frac{1}{\sqrt{2}}$. The result follows.
\end{proof}

Next, we bound $D$ by simple geometric quantities.
We use the following lemma, whose proof appeared in~\cite{Landi2014}:

\begin{lemma}[\cite{Landi2014}]
\label{lemma:landi}
Let $\ell$ and $\ell'$ be two slices with parameterizations $b+\lambda a$ 
and $b'+\lambda a'$, respectively. Then, the bottleneck distance
of the two persistence diagrams along these slices is upper bounded by
\[\frac{2\|a-a'\|_\infty + \|b-b'\|_\infty}{\hat{\ell}\hat{\ell'}}.\]
\end{lemma}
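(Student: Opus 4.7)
The plan is to reduce the bottleneck estimate to a sup-norm bound on the two slice critical-value functions, and then invoke the classical mono-filtration stability theorem~\eqref{eqn:mono_stability}. Concretely, each slice $\ell$ endows $X$ with a function $F_\ell\colon X\to\R$ assigning to each simplex $\sigma$ its entry parameter along $\ell$; using the description in Section~\ref{sec:preliminaries}, this can be written as
\[F_\ell(\sigma)\;=\;\min_{p\text{ critical for }\sigma}\ \max_{i\in\{1,2\}}\frac{p_i-b_i}{a_i},\]
and $\X_\ell$ is precisely the sublevel set filtration of $F_\ell$, with an analogous expression for $\ell'$. By~\eqref{eqn:mono_stability}, $d_B(\X_\ell,\X_{\ell'})\leq \|F_\ell-F_{\ell'}\|_\infty$, so the task reduces to bounding this sup-norm by the right-hand side of the claim.

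Since $\min$ and $\max$ are both $1$-Lipschitz in sup-norm, this further reduces to proving, for every critical point $p$ and every coordinate $i\in\{1,2\}$, the single-term estimate
\[\left|\frac{p_i-b_i}{a_i}-\frac{p_i-b'_i}{a'_i}\right|\;\leq\;\frac{2\|a-a'\|_\infty+\|b-b'\|_\infty}{\hat{\ell}\hat{\ell'}}.\]
I would verify this via the algebraic identity
\[\frac{p_i-b_i}{a_i}-\frac{p_i-b'_i}{a'_i}\;=\;\frac{(p_i-b_i)(a'_i-a_i)}{a_i\,a'_i}\;+\;\frac{b'_i-b_i}{a'_i},\]
which splits the difference into a \emph{rotation} contribution and a \emph{translation} contribution. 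The translation term is bounded at once by $\|b-b'\|_\infty/\hat{\ell'}\leq \|b-b'\|_\infty/(\hat{\ell}\hat{\ell'})$ (using $\hat{\ell}\leq 1/\sqrt{2}$), and the denominator $a_i\,a'_i$ of the rotation term is at least $\hat{\ell}\hat{\ell'}$.

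The main obstacle is the factor $|p_i-b_i|$ appearing in the rotation term, which is not uniformly bounded a priori. The resolution exploits the fact that the only $p$'s that matter are critical points whose induced entry time is realized on the slice within the bounded region of interest (the unit square $R$ in Section~\ref{sec:approx}): when the inner maximum is attained at coordinate $i$, the ratio $(p_i-b_i)/a_i$ equals $F_\ell(\sigma)$ itself, and is therefore bounded by the extent of the slice inside $R$, giving $|p_i-b_i|=O(1/\hat{\ell})$. Plugging this in yields a rotation bound of order $\|a-a'\|_\infty/(\hat{\ell}\hat{\ell'})$; applying the dual version of the identity (with $p_i-b'_i$ in place of $p_i-b_i$) and averaging absorbs the factor of $2$ in front of $\|a-a'\|_\infty$ in the claim. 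The detailed constant-tracking is carried out in~\cite{Landi2014}.
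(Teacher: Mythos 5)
The paper does not give its own proof of this lemma: it is quoted directly from~\cite{Landi2014}, so there is no in-paper argument to compare against. Your overall strategy---pull back the slices to sublevel-set functions $F_\ell, F_{\ell'}$ on $X$, invoke the one-parameter stability theorem~\eqref{eqn:mono_stability}, and split each coordinate difference into a ``rotation'' and a ``translation'' piece via the algebraic identity---is indeed the standard approach and is in the spirit of~\cite{Landi2014}. However, the quantitative part of the argument has a genuine gap, precisely in the step you flag as ``the main obstacle.''

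The problem is that the two estimates you combine do not give the claimed power of $\hat{\ell}$. You claim $|p_i-b_i|=O(1/\hat{\ell})$ and $a_i a'_i\geq \hat{\ell}\hat{\ell'}$; plugging these directly into the rotation term
\[
\frac{|p_i-b_i|\,|a'_i-a_i|}{a_i a'_i}
\]
yields $O\!\left(\|a-a'\|_\infty/(\hat{\ell}^{2}\hat{\ell'})\right)$, which is worse than the stated bound by a factor of $1/\hat{\ell}$. Your suggestion that ``applying the dual version of the identity and averaging'' closes the gap cannot work: symmetrization can only trade $\hat{\ell}$ for $\hat{\ell'}$ or shave a constant; it cannot remove a whole power of $\hat{\ell}$. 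The actual repair is to observe that $p_i-b_i$ appears \emph{divided by $a_i$} in the term, and that $(p_i-b_i)/a_i$ is exactly the entry time, so the $a_i$ cancels before you ever invoke $a_ia'_i\geq\hat{\ell}\hat{\ell'}$:
\[
\frac{(p_i-b_i)(a'_i-a_i)}{a_i a'_i}=F_\ell(\sigma)\cdot\frac{a'_i-a_i}{a'_i},\qquad\text{with}\ \left|\frac{a'_i-a_i}{a'_i}\right|\leq\frac{\|a-a'\|_\infty}{\hat{\ell'}}.
\]
Then the required $1/(\hat{\ell}\hat{\ell'})$ comes from a single bound $F_\ell(\sigma)=O(1/\hat{\ell})$, not from the denominator $a_ia'_i$. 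Relatedly, your parenthetical ``$F_\ell(\sigma)$ is bounded by the extent of the slice inside $R$'' is inconsistent with ``$|p_i-b_i|=O(1/\hat{\ell})$'': the in-$R$ extent of a slice is $O(1)$, whereas the entry parameter $F_\ell(\sigma)$ for a critical point pushed along a nearly horizontal or vertical slice can genuinely be of order $1/\hat{\ell}$ (the pushed point leaves $R$). You need the latter estimate, stated as a bound on $F_\ell$ under an explicit hypothesis that the critical values live in a fixed compact box --- a hypothesis the lemma requires but that your write-up leaves implicit. Finally, since the reduction via $1$-Lipschitzness of $\min$ and $\max$ demands a bound on $|g_{\ell,p,i}-g_{\ell',p,i}|$ for \emph{both} indices $i$, not only for the index realizing the maximum, you should also spell out why the non-maximizing coordinate obeys the same estimate.
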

We define $A$ as the maximal infinity distance of the directional vector
of the center slice $\ell_c$ and any other slice $\ell$ traversing the box pair.
We define $B$ as the maximal infinity distance of the base point
of $\ell_c$ and any other $\ell$. Finally, we set
$M$ as the minimal weight among all slices traversing the box pair. Using
Lemma~\ref{lemma:landi}, we see that
\[D\leq \frac{2A+B}{M \hat{{\ell_c}}},\]
and we set
\begin{align}
\delta:= \frac{v_3 n(2A+B)}{\sqrt{2}M\hat{{\ell_c}}}  +v_1n W + v_2 n L.
\label{eqn:variation_bound}
\end{align}
It follows from Lemma~\ref{lemma:up} and Lemma~\ref{lemma:landi} that $\delta$ indeed satisfies
the required variation property. 

We remark that $\delta$ might well be equal
to $\infty$, if the box pair admits a traversing slice that is horizontal
or vertical, in which case the lower and upper bounds
derived from the variation are vacuous.
While (\ref{eqn:variation_bound}) looks complicated, the
values $v_1,v_2,v_3$ are constants coming from the
considered feature map $\phi$, and all the remaining values 
can be computed in constant time using elementary geometric properties
of a box pair. We only explain the computation of $A$ in Figure~\ref{fig:computation_of_M}(a) and skip the details of the other values.

\paragraph{Analysis}
At this point, we have not made any claim that the algorithm is guaranteed to terminate.
However, its correctness follows at once because $J_s$ indeed
contains the desired kernel value. Moreover, handling one box pair
has a complexity that is polynomial in $n$,
because the dominant step is to evaluate $\Phi_\X$ at the center $(c_1,c_2)$.
Hence, if the algorithm terminates at iteration $s_0$, its complexity is
\[\sum_{s=1}^{s_0} O\left(2^{4s} poly(n)\right).\]
This is because in iteration $s$, $2^{4s}$ box pairs need to be considered.
Clearly, the geometric series above is dominated by the last iteration,
so the complexity of the method is $O(2^{4s_0} poly(n))$.
The last (and technically most challenging) step is to argue that
$s_0=O(\log n+\log\frac{1}{\eps})$, which implies that the algorithm indeed
terminates and its complexity
is polynomial in $n$ and $1/\eps$.

To see that we can achieve any desired accuracy for the value of the kernel,  i.e., that the interval width tends to 0, we observe that, if the two boxes $B_1$, $B_2$ are sufficiently far away and the resolution $s$ is sufficiently large, the magnitudes $A$, $B$, $W$, and $L$ in (\ref{eqn:variation_bound}) are all small, because the parameterizations of two slices traversing the box pair are similar (see Lemmas \ref{angle_lemma}, \ref{b_lemma}, \ref{weight_lemma} and \ref{lambda_lemma} in~\ref{sec:computability-appendix}). Moreover, if every slice traversing the box pair has a sufficiently large weight (i.e., the slice is close to the diagonal), the value $M$ in (\ref{eqn:variation_bound}) is sufficiently large. These two properties combined imply that the variation of such a box pair (which we refer to as the {\em good} type) tends to 0 as $s$ goes to $\infty$. Hence, the bound based on the variation tends to the correct value for good box pairs.

However, no matter how high the resolution, there are always \emph{bad} box pairs for which either $B_1$, $B_2$ are close, or are far but close to horizontal and vertical, and hence yield a very large variation. For each of these box pairs, the bounds derived from the variation are vacuous, but we still have the trivial bounds $[0, U]$ based on the absolute boundedness
of $\phi$. Moreover, the total volume of these bad box pairs goes to 0 when $s$ goes to $\infty$ (see Lemma \ref{close_lemma}, Lemma~\ref{non-diag_lemma} in~\ref{sec:computability-appendix}). So, the contribution of these box pairs tends to 0.
These two properties complete the proof of Theorem~\ref{thm:approx}.

A more careful investigation of our proof
shows that the complexity of our algorithm can be expressed as 
$O(n^{80+k}(1/{\eps})^{40}),$
%\[O(n^{80+k}(1/{\eps})^{40})\]
where $k$ is the efficiency constant of the feature map
as defined at the end of Section~\ref{sec:preliminaries}.\footnote{
We made little effort to optimize the exponents in this bound.}

\section{Conclusions and future developments}
\label{sec:concl}

We restate our main results for the case of a multi-filtration $\X$
with $d$ parameters: there is a feature map that associates to $\filt$
a real-valued function $\Phi_\filt$ whose domain is of dimension $2d$,
and introduces a kernel between a pair of multi-filtrations with a stable distance function, where the stability bounds depend on the ($2d$-dimensional) volume of a chosen bounding box.
The proofs of these generalized results carry over from the results of this paper.
Moreover, assuming that $d$ is a constant, we claim that the kernel can be approximated
in polynomial time to any constant (with the polynomial exponent depending on $d$).
A proof of this statement requires to adapt the definitions and proofs of~\ref{sec:computability-appendix}
to the higher-dimensional case; we omit details.

Other generalizations include replacing filtrations of simplicial complexes with persistence modules
(with a suitable finiteness condition), passing to sublevel sets of a larger
class of (tame) functions and replacing the scale-space feature map 
with a more general family of single-parameter feature maps.
All these generalizations will be discussed in subsequent work.

The next step is an efficient implementation of our kernel approximation algorithm.
We have implemented a prototype in C++, realizing a more adaptive version of
the described algorithm. We have observed rather poor performance due to the sheer
number of box pairs to be considered. 
Some improvements under consideration are to precompute all
combinatorial persistence diagrams 
(cf.~the barcode templates from~\cite{Lesnick2015b}),
to refine the search space adaptively using a quad-tree
instead of doubling the resolution and to use techniques from numerical
integration to handle real-world data sizes. We hope that an efficient
implementation of our kernel will validate the assumption that including
more than a single parameter will attach more information to the data set 
and improve the quality of machine learning algorithms using topological
features.

\section*{Acknowledgments}
This work was initiated at the Dagstuhl Seminar 17292 ``Topology, Computation and Data Analysis". 
We thank all members of the breakout session on multi-dimensional
kernel for their valuable suggestions. 
The first three authors acknowledge support by the Austrian
Science Fund (FWF) grant P 29984-N35.
The last author acknowledges partial support by NSF grant DBI-1661375, IIS-1513616 and NIH grant R01-1R01EB022876-01. 
%---------------------------------------------

\bibliographystyle{abbrv}
\bibliography{arxiv}

%---------------------------------------------

\section*{Appendix}

\appendix

\section{Details on the Proof of Theorem~\ref{thm:approx}}
\label{sec:computability-appendix}

\paragraph{Overview}
Recall that our approximation algorithm produces an approximation
interval $J_s$ for $s\in\mathbb{N}$ by splitting the unit square
into $2^s\times 2^s$ boxes.
For notational convenience, we write $u:=2^{-s}$ for the side length
of these boxes.

We would like to argue that the algorithm terminates after $O(\log n + \log\frac{1}{\eps})$
iterations, which means that after that many iterations, an interval
of width $\eps$ has been produced. The following Lemma~\ref{lemma:eq} gives an equivalent criterion in terms of $u$ and~$n$.

\begin{lemma}
\label{lemma:eq}
Assume that there are constants $e_1,e_2>0,$ such that
$\mathrm{width}(J_s)=O(n^{e_1}u^{e_2})$. Then, $\mathrm{width}(J_{s_0})\leq\eps$ for some $s_0 = O\left(\log n+\log \frac{1}{\eps}\right)$.
\end{lemma}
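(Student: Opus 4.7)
The plan is to treat this as essentially an exercise in solving a single inequality for $s_0$. Write the hypothesis explicitly as $\mathrm{width}(J_s)\leq C\,n^{e_1}u^{e_2}$ for some constant $C>0$ and for all sufficiently large $s$, and recall $u=2^{-s}$, so that the bound becomes $C\,n^{e_1}2^{-s e_2}$. The goal is to pin down an $s_0$ making this quantity at most $\eps$.

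First I would isolate $s$ in the inequality $C\,n^{e_1}2^{-s e_2}\leq\eps$. Taking $\log_2$ of both sides and rearranging gives
\[
s\;\geq\;\frac{1}{e_2}\bigl(\log_2 C+e_1\log_2 n+\log_2(1/\eps)\bigr).
\]
Any integer $s_0$ satisfying this inequality works. In particular, one can take
\[
s_0:=\left\lceil \frac{\log_2 C+e_1\log_2 n+\log_2(1/\eps)}{e_2}\right\rceil,
\]
which is manifestly $O(\log n+\log(1/\eps))$ because $C,e_1,e_2$ are absolute constants. For this choice, $\mathrm{width}(J_{s_0})\leq C\,n^{e_1}2^{-s_0 e_2}\leq\eps$ by construction.

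The only subtlety is ensuring that the asymptotic bound $\mathrm{width}(J_s)=O(n^{e_1}u^{e_2})$ can be replaced by a genuine inequality with an explicit constant, valid for the range of $s$ we care about. Since the big-$O$ hypothesis provides constants $C$ and $s^\ast$ with the bound holding for $s\geq s^\ast$, I would simply take $s_0$ to be the maximum of $s^\ast$ and the ceiling above; this maximum is still $O(\log n+\log(1/\eps))$ because $s^\ast$ is an absolute constant. No step here is an obstacle: the lemma is a standard log-solving calculation packaged to absorb the polynomial prefactor in $n$ into a logarithmic additive term in $s_0$.
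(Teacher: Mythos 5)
Your proof is correct and follows essentially the same approach as the paper: rewrite $u=2^{-s}$, solve the inequality $C n^{e_1} 2^{-s e_2}\leq\eps$ by taking logarithms, and take the ceiling of the resulting threshold, which is $O(\log n+\log\frac{1}{\eps})$. Your extra remark about taking the maximum with the threshold $s^\ast$ from the big-$O$ hypothesis is a minor tidying-up that the paper leaves implicit.
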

\begin{proof}
Assume that $\mathrm{width}(J_s)\leq c n^{e_1}u^{e_2}$ for constants $c$ and $s$ sufficiently large. Since $u=2^{-s}$,
a simple calculation shows that $c n^{e_1}u^{e_2}\leq\eps$ if and only if $s\geq \frac{\log c+e_1\log n + \log \frac{1}{\eps}}{e_2}$.
Hence, choosing
\[s:=\left\lceil\frac{\log c+e_1\log n + \log \frac{1}{\eps}}{e_2} \right\rceil=O\left(\log n+\log\frac{1}{\eps}\right)\]
ensures that $\mathrm{width}(J_{s_0})\leq\eps$.
\end{proof}

In the rest of this section, we will show that $\mathrm{width}(J_s)=O(n^2 u^{0.1})$.

\paragraph{Classifying box pairs}
For the analysis, we partition the box pairs considered by the algorithm into $4$ disjoint classes.
We call a box pair $(B_1,B_2)$:
\begin{itemize}
\item {\em null} if $c_1\nleq c_2$,
\item {\em close} if $c_1\leq c_2$ such that $\|c_1-c_2\|_2<\sqrt{u}$,
\item {\em non-diagonal} if $c_1\leq c_2$ such that $\|c_1-c_2\|_2\geq\sqrt{u}$ and any line $\ell$ that traverses $(B_1,B_2)$ satisfies $\hat{\ell}<u^{\frac{1}{5}}$,
\item {\em good} if it is of neither of the previous three types.
\end{itemize}
According to this notation, the integral from (\ref{eqn:kernel_def}) can then be split as
\[\langle \X, \Y \rangle_{\Phi}= \langle \X, \Y \rangle_{null} + \langle \X, \Y \rangle_{close} + \langle \X, \Y \rangle_{non-diag} + \langle \X, \Y \rangle_{good},\]
where, $\langle \X, \Y \rangle_{null}$ is defined as
$\sum_{(B_1,B_2) \, null}\int_{\Dpt \cap (B_1\times B_2)}\Phi_\X \Phi_\Y d\mu$,
and analogously for the other ones.
We let $J_{s,null}$, $J_{s,close}$, $J_{s,non-diag}$, $J_{s,good}$ denote the four approximation intervals
obtained from our algorithm when summing up the contributions of the corresponding box pairs.
Then clearly, $J_s$ is the sum of these four intervals. For simplicity, we will write
$J_{null}$ instead of $J_{s,null}$ when $s$ is fixed, and likewise for the other three cases.

We observe first that the algorithm yields $J_{null}=[0,0]$, so null box pairs can simply be ignored.
Box pairs that are either close or non-diagonal are referred to as \emph{bad} box pairs in Section~\ref{sec:approx}.
We proceed by showing that the width of $J_{close}$, $J_{non-diag}$, and  $J_{good}$
are all bounded by $O(n^2 u^{0.1})$.

\paragraph{Bad box pairs}
We start with bounding the width of $J_{close}$. 
Let $\boxset_{\text{close}}$ be the union of all close box pairs.
Note that our algorithm assigns to each box pair
$(B_1,B_2)$
an interval that is a subset of $[0,U]$. Recall that $U=\frac{v_1^2 n^2}{2^{4s+1}}$.  $U$ can be rewritten as $\frac{v_1^2 n^2}{2}\vol(B_1 \times B_2)$,
where $\vol(B_1 \times B_2)$ is the $4$-dimensional volume of the box pair $(B_1, B_2)$.
It follows that
\begin{equation}
\label{eqn:width}
\mathrm{width}(J_{close})\leq \frac{v_1^2 n^2}{2}\vol(\boxset_{\text{close}}).
\end{equation}

\begin{lemma}
\label{close_lemma}
For $u\leq \frac{1}{2}$, $\vol(\boxset_{\text{close}})\leq 4\pi u $.
\end{lemma}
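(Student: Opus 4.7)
The plan is to use a triangle inequality to show that every point in $\boxset_{\text{close}}$ lies in the 4-dimensional ``tube'' of pairs $(p,q) \in R \times R$ whose Euclidean distance is at most $2\sqrt{u}$, then compute the volume of that tube by Fubini.

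First I would bound $\|p-q\|_2$ for an arbitrary $(p,q) \in B_1 \times B_2$ inside a close box pair. Since each box has side length $u$, both $\|p - c_1\|_2$ and $\|q - c_2\|_2$ are at most $u/\sqrt{2}$. Using closeness, $\|c_1 - c_2\|_2 < \sqrt{u}$. The triangle inequality then gives
\[
\|p - q\|_2 \;\leq\; \|p - c_1\|_2 + \|c_1 - c_2\|_2 + \|c_2 - q\|_2 \;<\; u\sqrt{2} + \sqrt{u}.
\]
For $u \leq \tfrac{1}{2}$, we have $\sqrt{2u} \leq 1$, so $u\sqrt{2} = \sqrt{u}\cdot\sqrt{2u} \leq \sqrt{u}$, and hence $\|p - q\|_2 \leq 2\sqrt{u}$.

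This shows the inclusion
\[
\boxset_{\text{close}} \;\subseteq\; T \;:=\; \bigl\{(p,q) \in R\times R \,:\, \|p-q\|_2 \leq 2\sqrt{u}\bigr\}.
\]
To finish, I would compute $\vol(T)$ by Fubini: for each fixed $p \in R$, the $q$-slice is contained in a Euclidean disk in $\mathbb{R}^2$ of radius $2\sqrt{u}$, which has area $\pi(2\sqrt{u})^2 = 4\pi u$. Integrating over $p \in R$ and using $\mathrm{area}(R) = 1$ gives $\vol(T) \leq 4\pi u$, and therefore $\vol(\boxset_{\text{close}}) \leq 4\pi u$.

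There is no real obstacle here; the only thing to watch is keeping the constants honest when absorbing $u\sqrt{2}$ into $\sqrt{u}$ (which is exactly where the hypothesis $u \leq 1/2$ is used) and being careful that the $q$-slice of $T$ can indeed be bounded by a full disk of radius $2\sqrt{u}$ rather than its intersection with $R$ (which only makes the bound smaller, so the inequality is preserved).
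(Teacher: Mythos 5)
Your proof is correct and follows essentially the same path as the paper's: triangle inequality through the centers to get $\|p-q\|_2 \leq u\sqrt{2} + \sqrt{u} \leq 2\sqrt{u}$ for $u \leq 1/2$, then Fubini with the observation that for fixed $p$ the admissible $q$'s lie in a disk of area $4\pi u$. Nothing to add.
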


\begin{proof}
Fixed a point $p\in R$, for each point $q\in R$ such that $(p,q)\in \boxset_{\text{close}}$ and $p<q$, there exists a unique close box pair $(B_1, B_2)$ that contains $(p,q)$.
By definition of close box pair, we have that:
\begin{align*}
\norm{p-q}\leq \norm{p-c_1} + \norm{c_1-c_2} + \norm{c_2-q} \leq \sqrt{u}+\sqrt{2}u.
\end{align*}
Moreover, for $u\leq \frac{1}{2}$, $\sqrt{2}u\leq \sqrt{u}$, and so $\norm{p-q}\leq 2 \sqrt{u}$.
Equivalently, $q$ belongs to the 2-ball $B(p,2\sqrt{u})$ centered at $p$ and of radius $2\sqrt{u}$. Then,
\begin{align*}
\vol(\boxset_{\text{close}})&=\int_{\boxset_{\text{close}}} 1 d\mu \leq\int_{p\in R} \int_{q\in B(p,2\sqrt{u})} 1 d\mu \\
&\leq\int_{p\in R} 4\pi u d\mu = 4\pi u. \qedhere
\end{align*}
\end{proof}

Consequently, combined with (\ref{eqn:width}), we have 
\begin{equation*}
\mathrm{width}(J_{close})\leq \frac{4\pi v_1^2 n^2}{2} u = O(n^2 u^{0.1}).
\end{equation*}
Note that $u<1$ and hence, $u\leq u^{0.1}$.

\medskip

For the width of $J_{non-diag}$, we use exactly the same reasoning,
making use of the following Lemma~\ref{non-diag_lemma}. 
Let $\boxset_{\text{non-diag}}$ be the union of all non-diagonal box pairs.

\begin{lemma}
\label{non-diag_lemma}
For $u\leq 2^{-\frac{5}{2}}$, $\vol(\boxset_{\text{non-diag}})\leq \sqrt{2} u^{\frac{1}{5}}$.
\end{lemma}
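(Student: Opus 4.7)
The plan is to cover $\boxset_{\text{non-diag}}$ by a slightly larger set encoding the geometric content of non-diagonality, namely that one of the coordinate gaps $q_i - p_i$ must be small, and then evaluate the $4$-dimensional volume of that cover by Fubini.

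I will first reduce to a coordinate-gap condition. If $(p,q)$ lies in a non-diagonal box pair $(B_1,B_2)$ with $p \leq q$, then the positive-slope line $\ell_{pq}$ joining $p$ to $q$ traverses $(B_1,B_2)$, so by the defining property of non-diagonal pairs $\hat{\ell_{pq}} < u^{\frac{1}{5}}$. Since the unit direction of $\ell_{pq}$ is $(q-p)/\|q-p\|_2$ and $\|q-p\|_2 \leq \mathrm{diam}(R) = \sqrt{2}$, this yields
\[
\min\{q_1 - p_1,\, q_2 - p_2\} \;<\; u^{\frac{1}{5}} \|q-p\|_2 \;\leq\; \sqrt{2}\, u^{\frac{1}{5}}.
\]
A small amount of attention is needed for points of a non-diagonal box pair with $p \not\leq q$: since $c_1 \leq c_2$ and each box has side length $u$, any such point satisfies $|q_i - p_i| \leq u$ in some coordinate, so these points form a set of $4$-dimensional volume $O(u)$, which is of lower order than $u^{\frac{1}{5}}$ and absorbed below.

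Next I will introduce the two auxiliary sets
\[
S_i \;:=\; \{(p,q) \in R \times R : p \leq q,\; q_i - p_i < \sqrt{2}\, u^{\frac{1}{5}}\}, \qquad i \in \{1,2\},
\]
and note that the previous step shows that the essential part of $\boxset_{\text{non-diag}}$ is contained in $S_1 \cup S_2$.

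Finally I will bound $\vol(S_i)$ by Fubini, factoring the integral over $[0,1]^4$ into an $x$-coordinate part and a $y$-coordinate part. For $S_1$, the $x$-coordinate factor is the area of $\{(p_1, q_1) \in [0,1]^2 : 0 \leq q_1 - p_1 < \sqrt{2}\, u^{\frac{1}{5}}\}$; the hypothesis $u \leq 2^{-\frac{5}{2}}$ is equivalent to $\sqrt{2}\, u^{\frac{1}{5}} \leq 1$, which keeps the admissible interval for $q_1$ inside $[0,1]$ and gives at most $\sqrt{2}\, u^{\frac{1}{5}}$. The $y$-coordinate factor is the area of $\{(p_2, q_2) \in [0,1]^2 : p_2 \leq q_2\}$, namely $\frac{1}{2}$. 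Hence $\vol(S_1) \leq \frac{\sqrt{2}}{2}\, u^{\frac{1}{5}}$, and symmetrically $\vol(S_2) \leq \frac{\sqrt{2}}{2}\, u^{\frac{1}{5}}$; adding them yields the claimed bound $\vol(\boxset_{\text{non-diag}}) \leq \sqrt{2}\, u^{\frac{1}{5}}$.

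I do not expect a substantive obstacle. The argument is a direct geometric cover followed by routine Fubini. The bookkeeping points that require care are the factor of $\sqrt{2}$ coming from $\mathrm{diam}(R)$ when translating the slice-slope inequality into a coordinate-gap inequality, the negligible $O(u)$ contribution from the box-pair points with $p \not\leq q$, and the verification that the threshold $u \leq 2^{-\frac{5}{2}}$ is precisely what ensures $\sqrt{2}\, u^{\frac{1}{5}} \leq 1$ so that no clipping by the boundary of $R$ occurs in the Fubini computation.
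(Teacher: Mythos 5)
Your proof is correct in its core reduction and takes a genuinely different route from the paper. The paper fixes $p$, observes that any admissible $q$ with $p<q$ must lie in one of two thin triangles $T_1(p),T_2(p)$ bounded by the lines of extremal slope through the box pair, and then controls the triangle slopes and base lengths using $\|c_2-c_1\|\geq\sqrt u$ and $\hat\ell<u^{1/5}$. You instead push the slope constraint directly into the coordinate-gap inequality $\min\{q_1-p_1,\,q_2-p_2\}<\sqrt2\,u^{1/5}$ (using $\|q-p\|_2\leq\sqrt2$) and cover the essential part of $\boxset_{\text{non-diag}}$ by the two diagonal strips $S_1,S_2$, which you then integrate by Fubini. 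This is cleaner, avoids the explicit triangle geometry, and makes the role of the hypothesis $u\leq 2^{-5/2}$ (equivalently $\sqrt2\,u^{1/5}\leq1$) transparent.

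One bookkeeping issue needs to be closed. You note that points of non-diagonal box pairs with $p\not\leq q$ contribute an extra $O(u)$ of volume, and you say this is ``absorbed below,'' but your stated estimate $\vol(S_1)+\vol(S_2)\leq\sqrt2\,u^{1/5}$ already saturates the claimed bound exactly, so nothing is left over to absorb it. The fix is to compute the one-dimensional factor more precisely: for $h=\sqrt2\,u^{1/5}\leq 1$ the area of $\{(p_1,q_1)\in[0,1]^2:\,0\leq q_1-p_1<h\}$ is $h-\tfrac{h^2}{2}$, not merely $\leq h$, so $\vol(S_1)+\vol(S_2)\leq h-\tfrac{h^2}{2}=\sqrt2\,u^{1/5}-u^{2/5}$. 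The set of points with $p\not\leq q$ has exactly one coordinate gap $q_i-p_i\in(-u,0)$ (both being impossible for far centers), hence total volume at most $2u$; and $2u\leq u^{2/5}$ whenever $u\leq 2^{-5/3}$, which is implied by the hypothesis $u\leq 2^{-5/2}$. With this sharpening the absorption genuinely works. It is worth noting that the paper's own proof also quietly restricts to $p<q$ when integrating over the triangles and does not separately treat the complementary set, so your explicit acknowledgement of this set (once the slack is computed correctly) is an improvement rather than a weakness.
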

\begin{proof}
Fixed a point $p\in R$, for each point $q\in R$ such that $(p,q)\in \boxset_{\text{non-diag}}$ and $p<q$, there exists a unique non-diagonal box pair $(B_1, B_2)$ that contains $(p,q)$.
We have that $q$ lies in:
\begin{itemize}
\item Triangle $T_1(p)$ of vertices $p=(p_1, p_2)$, $(1, p_2)$, and $(1, p_2 + (1-p_1)\frac{a_2}{a_1})$, if the line $\ell$ of maximum slope passing through $B_1 \times B_2$ is such that $\hat{\ell}=a_2$ where $a=(a_1,a_2)$ is the (positive) unit direction vector of $\ell$;
\item Triangle $T_2(p)$ of vertices $p=(p_1, p_2)$, $(p_1, 1)$, and $(p_1 + (1-p_2)\frac{a_1}{a_2}, 1)$, if the line $\ell$ of minimum slope passing through $B_1 \times B_2$ is such that $\hat{\ell}=a_1$ where $a=(a_1,a_2)$ is the (positive) unit direction vector of $\ell$.
\end{itemize}
Let us bound the area of the two triangles. Since the calculations are analogous, let us focus on $T_1(p)$.
By definition, the basis of $T_1(p)$ is smaller than $1$ while its height is bounded by $\frac{a_2}{a_1}$.
The maximum value for the height of $T_1(p)$ is achieved for $a_2=u^{\frac{1}{5}}$. So, by exploiting the identity $a_1^2+a_2^2=1$, we have
\begin{align*}
\left(\frac{a_2}{a_1}\right)^2 =\frac{u^{\frac{2}{5}}}{1-u^{\frac{2}{5}}}.
\end{align*}
Under the conditions $u\leq 2^{-\frac{5}{2}}$ and $\frac{1}{2}u^{-\frac{2}{5}}\geq 1$ we have
\begin{align*}
\frac{a_2}{a_1} \leq \sqrt{2} u^{\frac{1}{5}}.
\end{align*}
Therefore,
$\mathrm{area}(T_1(p))\leq \frac{ \sqrt{2} }{2} u^{\frac{1}{5}}.$
Similarly,
$\mathrm{area}(T_2(p))\leq \frac{ \sqrt{2} }{2} u^{\frac{1}{5}}.$
Finally, 
\begin{align*}
\vol(\boxset_{\text{non-diag}})&=\int_{\boxset_{\text{non-diag}}} 1 d\mu \\
&\leq\int_{p\in R} \int_{q\in T_1(p) \cup T_2(p) } 1 d\mu \\
&\leq\int_{p\in R} \sqrt{2} u^{\frac{1}{5}} d\mu \leq \sqrt{2} u^{\frac{1}{5}}. \qedhere
\end{align*}
\end{proof}

\paragraph{Good box pairs}
For good box pairs, we use the fact that the variation of a box pair yields a subinterval
of $[(v_\X-\delta_\X)(v_\Y-\delta_\Y)\vol(B_1 \times B_2),(v_\X+\delta_x)(v_\Y+\delta_\Y)\vol(B_1 \times B_2)]$ as an approximation, so
the width is bounded by $2(v_\X\delta_\Y+v_\Y\delta_\X)\vol(B_1 \times B_2)$. 
Let $\boxset_{\text{good}}$ be the union of all good box pairs.
Since
the volumes of all good box pairs sum up to at most one, that is, 
$\vol(\boxset_{\text{good}}) \leq 1$, 
 it follows that
the width of $J_{good}$ is bounded by $2(v_\X\delta_\Y+v_\Y\delta_\X)$.
By absolute boundedness, $v_\X$ and $v_\Y$ are in $O(n)$, and recall that by definition,
\[\delta_\X=\frac{v_3 n(2A+B)}{\sqrt{2}M\hat{{\ell_c}}}  +v_1n W + v_2 n L = O\left(n\left(\frac{A+B}{M^2}+W+L\right)\right)\]
based on the fact that $\hat{\ell}\geq M$. The same bound holds for $\delta_\Y$.
Hence,
\[\mathrm{width}(J_{good})=O\left(n^2\left(\frac{A+B}{M^2}+W+L\right)\right).\]
It remains to show that $\frac{A+B}{M^2}+W+L=O(u^{0.1})$.
Note that $M\geq u^{\frac{1}{5}}$ because the box pair is assumed
to be good. We will show in the next lemmas that $A$, $B$, $W$, and $L$
are all in $O(\sqrt{u})$, proving that the term is indeed in $O(u^{0.1})$.
This completes the proof of the complexity of the algorithm.

\begin{lemma}
\label{angle_lemma}
Let $(B_1,B_2)$ be a good box pair. Let $a$, $a'$ be the unit direction vectors of two lines that
pass through the box pair. Then, $\|a-a'\|_\infty \leq 2\sqrt{u}$. In particular, $A=O(\sqrt{u})$.
\end{lemma}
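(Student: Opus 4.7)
The plan is to bound the directional spread of lines traversing the box pair by parameterizing such lines through their intersection points with the two boxes. Concretely, for any line $\ell$ traversing $(B_1,B_2)$, we can choose $p_1\in\ell\cap B_1$ and $p_2\in\ell\cap B_2$, and then $a=(p_2-p_1)/\|p_2-p_1\|_2$. Applying the same choice to $\ell'$ yields $p_1',p_2'$ and $a'=(p_2'-p_1')/\|p_2'-p_1'\|_2$. The proof then amounts to showing that small perturbations of the endpoints (controlled by $u$) produce small perturbations of the normalized direction (controlled by $\sqrt{u}$), with the lower bound on the denominator coming from the ``good'' hypothesis.

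First I would bound the numerator. Since $p_1,p_1'$ both lie in the axis-aligned square $B_1$ of side $u$, $\|p_1-p_1'\|_2\leq\sqrt{2}\,u$, and analogously for $p_2,p_2'$. Hence $\|(p_2-p_1)-(p_2'-p_1')\|_2\leq 2\sqrt{2}\,u$.

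Next I would lower-bound the denominator using goodness. By hypothesis, $\|c_1-c_2\|_2\geq\sqrt{u}$, and each $p_i$ is within $\tfrac{\sqrt{2}}{2}u$ of $c_i$, so by the triangle inequality
\[
\|p_2-p_1\|_2\;\geq\;\sqrt{u}-\sqrt{2}\,u,
\]
which is at least $\sqrt{u}/2$ as soon as $u$ is small enough (say $u\leq 1/8$). The same bound holds for $\|p_2'-p_1'\|_2$. Combining these two steps with the standard normalization estimate $\|v/\|v\|-w/\|w\|\|_2\leq 2\|v-w\|_2/\min(\|v\|_2,\|w\|_2)$ gives $\|a-a'\|_2=O(u/\sqrt{u})=O(\sqrt{u})$, and then $\|a-a'\|_\infty\leq\|a-a'\|_2=O(\sqrt{u})$. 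The ``in particular'' statement about $A$ follows by taking $\ell=\ell_c$ and maximizing over $\ell'$ traversing the box pair.

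The main obstacle is obtaining the explicit constant $2$ rather than merely $O(\sqrt{u})$: the naive chain of triangle inequalities yields a larger multiplicative factor. To sharpen it I would instead write each direction vector as $a=(\cos\theta,\sin\theta)$ with $\theta$ the slope angle, use the identity $\|a-a'\|_\infty=2|\sin((\theta-\theta')/2)|\cdot\max(|\sin(\tfrac{\theta+\theta'}{2})|,|\cos(\tfrac{\theta+\theta'}{2})|)\leq|\theta-\theta'|$, and then bound $|\theta-\theta'|$ directly from the extremal slopes of lines through the box pair. The bottom-right corner of $B_1$ and the top-left corner of $B_2$ realize the steepest such slope; the top-left of $B_1$ and bottom-right of $B_2$ realize the flattest. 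A short computation of the form $m_{\max}-m_{\min}=2u(D_x+D_y)/(D_x^2-u^2)$ with $D_x,D_y$ the center offset coordinates, together with the lower bound $D_x^2+D_y^2\geq u$, yields a bound of the required form. The goodness hypothesis $\hat\ell\geq u^{1/5}$ for some traversing line will also be useful here to keep slopes bounded away from $0$ and $\infty$ so that $\arctan$ is effectively Lipschitz on the relevant range.
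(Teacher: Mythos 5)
Your first argument is correct and is a genuinely different (and cleaner) route than the paper's. The paper picks the two extremal corner lines explicitly, parameterizes them, and grinds through the identity $\|a-a'\|_2^2 = 2(1-\langle a,a'\rangle)$, simplifying the product of norms algebraically. You instead use a ``Lipschitz continuity of normalization'' bound: the numerator $\|(p_2-p_1)-(p_2'-p_1')\|_2$ is $O(u)$ by the triangle inequality, the denominator $\|p_2-p_1\|_2$ is $\geq \sqrt{u}/2$ by the goodness hypothesis, and the standard estimate $\|v/\|v\|_2-w/\|w\|_2\|_2\leq 2\|v-w\|_2/\min(\|v\|_2,\|w\|_2)$ closes the argument. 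This is shorter, avoids all coordinate computation, and generalizes immediately to $d>2$ parameters; the price is a larger constant (roughly $8\sqrt{2}$ rather than $2$). That price is harmless: the downstream lemmas only use $A=O(\sqrt{u})$, so you are over-worrying about recovering the exact constant.

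Two small corrections to your sketch of the sharper angle-based alternative. First, you should not invoke the non-diagonal part of the goodness hypothesis ($\hat{\ell}\geq u^{1/5}$) here -- the paper's proof of this lemma uses only $\|c_1-c_2\|_2\geq\sqrt{u}$, and indeed the direction vector $a$ lives on the unit circle so $\arctan$ being globally $1$-Lipschitz is all you need; the $u^{1/5}$ bound only enters the later step where one divides by $M$. Second, passing from the slope difference $m_{\max}-m_{\min}=2u(D_x+D_y)/(D_x^2-u^2)$ to $|\theta-\theta'|$ is delicate precisely when $D_x$ is comparable to $u$ (near-vertical slices): the slope difference can blow up while the angle difference stays small, so you would still need to split into cases or argue symmetrically in $x$ and $y$. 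The paper sidesteps this by working with inner products of unit vectors directly, and your first (normalization) argument sidesteps it equally well, which is another reason to prefer it.
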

\begin{proof}
Since $(B_1,B_2)$ is a good box pair, the largest value for $\|a-a'\|_\infty$ is achieved when $\ell$ and $\ell'$ correspond to the lines passing through the box pair$(B_1,B_2)$ with minimum and maximum slope, respectively. By denoting as $c_1=(c_{1,x}, c_{1,y})$, $c_2=(c_{2,x}, c_{2,y})$ the centers of $B_1$, $B_2$, we define $\ell$ to be the line passing through the points $c_1+(-\frac{u}{2},\frac{u}{2})$, $c_2+(\frac{u}{2},-\frac{u}{2})$. Similarly, let us call $\ell'$ the line passing through the points $c_1+(\frac{u}{2},-\frac{u}{2})$, $c_2+(-\frac{u}{2},\frac{u}{2})$.
So, the unit direction vector $a$ of $\ell$ can be expressed as
\begin{align*}
a=\frac{(c_2+(\frac{u}{2},-\frac{u}{2}))-(c_1+(-\frac{u}{2},\frac{u}{2}))}{\norm{(c_2+(\frac{u}{2},-\frac{u}{2}))-(c_1+(-\frac{u}{2},\frac{u}{2}))}}.
\end{align*}

Similarly, the unit direction vector $a'$ of $\ell'$ is described by
\begin{align*}
a'=\frac{(c_2+(-\frac{u}{2},\frac{u}{2}))-(c_1+(\frac{u}{2},-\frac{u}{2}))}{\norm{(c_2+(-\frac{u}{2},\frac{u}{2}))-(c_1+(\frac{u}{2},-\frac{u}{2}))}}.
\end{align*}

Then, by denoting as $\langle \cdot, \cdot \rangle$ the scalar product,
\begin{align*}
  \|a-a'\|^2_\infty&\leq \|a-a'\|^2_2 = \|a\|^2_2 + \|a'\|^2_2 - 2 \langle a, a' \rangle = 2 (1- \langle a, a' \rangle)\\
  &=2\Big(1- \langle \frac{c_2-c_1+(u,-u)}{\norm{c_2-c_1+(u,-u)}}, \frac{c_2-c_1+(-u,u)}{\norm{c_2-c_1+(-u,u)}} \rangle\Big)\\
  &=2\Big(1-  \frac{\norm{c_2-c_1}^2 -2u^2 }{\norm{c_2-c_1+(u,-u)} \norm{c_2-c_1+(-u,u)}}\Big).
\end{align*}
By an elementary calculation, one can prove that
\begin{align*}
&\norm{c_2-c_1+(u,-u)} \norm{c_2-c_1+(-u,u)} \\ =&\sqrt{ 4u^2 \big(u^2 + 2 (c_{2,x}-c_{1,x})(c_{2,y}-c_{1,y})\big) + \norm{c_2-c_1}^4 }.
\end{align*}
Then,
\begin{align*}
&\|a-a'\|^2_\infty \\
\leq& 2\Big(1-  \frac{\norm{c_2-c_1}^2 -2u^2 }{ \sqrt{ 4u^2 \big(u^2 + 2 (c_{2,x}-c_{1,x})(c_{2,y}-c_{1,y})\big) + \norm{c_2-c_1}^4 } }\Big)\\
=&2\Big(1 +  \frac{2u^2 - \norm{c_2-c_1}^2}{ \sqrt{ 4u^2 \big(u^2 + 2 (c_{2,x}-c_{1,x})(c_{2,y}-c_{1,y})\big) + \norm{c_2-c_1}^4 } }\Big).
\end{align*}

Since $(B_1,B_2)$ is a good box pair, $\norm{c_2-c_1}\geq\sqrt{u}$. So,
\begin{align*}
&\|a-a'\|^2_\infty 
%\\ \leq& 2\Big(1 +  \frac{2u^2 - u}{ \sqrt{ 4u^2 \big(u^2 + 2 (c_{2,x}-c_{1,x})(c_{2,y}-c_{1,y})\big) + u^2 } }\Big)\\ =
\leq 2\Big(1 +  \frac{2u - 1}{ \sqrt{ 4 \big(u^2 + 2 (c_{2,x}-c_{1,x})(c_{2,y}-c_{1,y})\big) + 1 } }\Big).
\end{align*}
Since $\sqrt{ 4 \big(u^2 + 2 (c_{2,x}-c_{1,x})(c_{2,y}-c_{1,y})\big) + 1 }\geq 1$, we have that
\begin{align*}
\|a-a'\|^2_\infty&\leq 2(1 + 2u - 1)=4u.
\end{align*}
Therefore, 
\begin{align*}
\|a-a'\|_\infty&\leq 2\sqrt{u}. \qedhere
\end{align*}
\end{proof}

\begin{lemma}
\label{b_lemma}
Let $(B_1,B_2)$ be a good box pair. Let $\ell=a\lambda+b$, $\ell'=a'\lambda+b'$ be two lines that pass
through the box pair such that $a$, $a'$ are unit direction vectors and $b$, $b'$ are the intersection points with the diagonal of the second and the fourth quadrant. Then $\|b-b'\|_\infty \leq 4 \sqrt{u}$. In particular, $B=O(\sqrt{u})$.
\end{lemma}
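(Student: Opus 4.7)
The plan is to derive an explicit formula for the base point of a line on the anti-diagonal in terms of a point lying on it and its unit direction, and then control how this formula varies when the input point moves inside $B_1$ and the direction moves within the cone of slopes that traverse the box pair.

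First I would write the base point explicitly. Since $b$ lies on $x=-y$, write $b=(b_1,-b_1)$. For any point $p=(p_x,p_y)$ on the line $\ell$ with unit direction $a=(a_1,a_2)$, the collinearity condition $(p-b)\parallel a$ yields
\[
b_1 \;=\; \frac{p_x a_2 - p_y a_1}{a_1+a_2},
\]
and the analogous formula holds for $b_1'$ in terms of $p'\in\ell'\cap B_1$ and $a'$. I would also record the elementary bound $a_1+a_2\geq 1$, which follows from $(a_1+a_2)^2=1+2a_1a_2\geq 1$ since $a_1,a_2\geq 0$; the same bound applies to $a_1'+a_2'$, so the denominator in the difference is at least $1$.

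Next I would pick $p\in\ell\cap B_1$ and $p'\in\ell'\cap B_1$ (such points exist because both lines traverse $(B_1,B_2)$). Both lie in the box $B_1$ of side length $u$, so $\|p-p'\|_\infty\leq u$. Combined with Lemma~\ref{angle_lemma}, which gives $\|a-a'\|_\infty\leq 2\sqrt{u}$, I can write
\[
b_1-b_1' \;=\; \frac{N D' - N' D}{D D'},\qquad N:=p_x a_2-p_y a_1,\ D:=a_1+a_2,
\]
and estimate numerator and denominator separately. Using $|p_x|,|p_y|\leq 1$ (since $R$ is the unit square), I have $|N|,|N'|\leq\sqrt{2}$; expanding $N-N'=(p_x-p_x')a_2+p_x'(a_2-a_2')-(p_y-p_y')a_1-p_y'(a_1-a_1')$ gives $|N-N'|=O(\sqrt{u})$, and directly $|D-D'|\leq\|a-a'\|_\infty\cdot 2=O(\sqrt{u})$. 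Since $DD'\geq 1$, combining the triangle-inequality expansion $|ND'-N'D|\leq |N||D-D'|+|D||N-N'|$ yields $|b_1-b_1'|=O(\sqrt{u})$, and tightening the constants to give the stated $4\sqrt{u}$ is then a bookkeeping step. Since $\|b-b'\|_\infty=|b_1-b_1'|$, the claim $B=O(\sqrt{u})$ follows by taking the supremum over all pairs of traversing lines.

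The main obstacle is making the constants tight enough to obtain precisely $4\sqrt{u}$ rather than some larger multiple of $\sqrt{u}$; this requires being careful with the crude bounds $|p_x|,|p_y|\leq 1$ and possibly replacing them with bounds relative to the center $c_1$, using $p=c_1+O(u)$, to avoid loss of a constant factor. For the asymptotic statement $B=O(\sqrt{u})$ needed later in the analysis of the algorithm, however, the argument above suffices without any such sharpening.
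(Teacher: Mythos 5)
Your proposal is correct and takes a genuinely different route from the paper. The paper's proof fixes the two specific corner-to-corner lines $\ell,\ell'$ of minimum and maximum slope through $(B_1,B_2)$, asserts (without a full argument) that this pair maximizes $\|b-b'\|_\infty$, and then explicitly parameterizes each line and intersects it with $y=-x$; the arithmetic simplifies neatly and yields the tight constant $4\sqrt{u}$ by combining $|c_{1,x}+c_{1,y}+c_{2,x}+c_{2,y}|\leq 4$ with $|c_{2,x}+c_{2,y}-c_{1,x}-c_{1,y}|\geq\|c_2-c_1\|_2\geq\sqrt{u}$ from goodness. Your argument instead derives the closed form $b_1=(p_x a_2-p_y a_1)/(a_1+a_2)$ valid for \emph{any} point $p$ on a line of unit direction $a$, exploits $a_1+a_2\geq 1$ to keep the denominator under control, and treats the variation by a perturbation estimate in $(p,a)$ driven entirely by Lemma~\ref{angle_lemma} and $\|p-p'\|_\infty\leq u$. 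What this buys is a cleaner logical structure — there is no need for the extremality assertion, and goodness is invoked only via Lemma~\ref{angle_lemma} — and the argument generalizes immediately to any two traversing lines without picking distinguished ones. What it costs is the constant: your chain of estimates ($|N|\leq\sqrt{2}$, $|D-D'|\leq 4\sqrt{u}$, $|N-N'|\leq 2u+4\sqrt{u}$, $DD'\geq 1$) gives something on the order of $10\sqrt{2}\cdot\sqrt{u}$ rather than $4\sqrt{u}$, as you candidly flag. Since the downstream analysis only uses $B=O(\sqrt{u})$ in the estimate $\tfrac{A+B}{M^2}+W+L=O(u^{0.1})$, the weaker constant is harmless, so your proof establishes the claim the paper actually needs.
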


\begin{proof} Since $(B_1,B_2)$ is a good box pair, the largest value for $\|b-b'\|_\infty$ is achieved when $\ell$ and $\ell'$ correspond to the lines passing through the box pair$(B_1,B_2)$ with minimum and maximum slope, respectively. By denoting the centers of $B_1$ and $B_2$ by $c_1$ and $c_2$, we define $\ell$ to be the line passing through the points $c_1+(-\frac{u}{2},\frac{u}{2})$, $c_2+(\frac{u}{2},-\frac{u}{2})$. Similarly, let us call $\ell'$ the line passing through the points $c_1+(\frac{u}{2},-\frac{u}{2})$, $c_2+(-\frac{u}{2},\frac{u}{2})$.
So, $\ell$ can be expressed as
\begin{align*}
(x,y)=\frac{c_2+(\frac{u}{2},-\frac{u}{2})-c_1-(-\frac{u}{2},\frac{u}{2})}{\norm{c_2+(\frac{u}{2},-\frac{u}{2})-c_1-(-\frac{u}{2},\frac{u}{2})}}t+c_1+(-\frac{u}{2},\frac{u}{2}),
\end{align*}
where $t$ is a parameter running on $\R$.
By intersecting $\ell$ with the line $y=-x$, we get:
\begin{align*}
&\frac{c_{2,x}+\frac{u}{2}-c_{1,x}+\frac{u}{2}}{\norm{c_2+(\frac{u}{2},-\frac{u}{2})-c_1-(-\frac{u}{2},\frac{u}{2})}} t+c_{1,x}-\frac{u}{2} \\
=&\frac{-c_{2,y}+\frac{u}{2}+c_{1,y}+\frac{u}{2}}{\norm{c_2+(\frac{u}{2},-\frac{u}{2})-c_1-(-\frac{u}{2},\frac{u}{2})}}t-c_{1,y}-\frac{u}{2},
\end{align*}
which can be written as
\begin{align*}
c_{1,x}+c_{1,y}=\frac{c_{1,x}+c_{1,y}-c_{2,x}-c_{2,y}}{\norm{c_2+(\frac{u}{2},-\frac{u}{2})-c_1-(-\frac{u}{2},\frac{u}{2})}}t,
\end{align*}
letting us deduce that
\begin{align*}
t=&\frac{(c_{1,x}+c_{1,y})\norm{c_2+(\frac{u}{2},-\frac{u}{2})-c_1-(-\frac{u}{2},\frac{u}{2})}}{c_{1,x}+c_{1,y}-c_{2,x}-c_{2,y}}.
\end{align*}
So, by replacing $t$ in the equation of $\ell$ we retrieve $b$:
\begin{align*}
b&=\frac{c_2+(\frac{u}{2},-\frac{u}{2})-c_1-(-\frac{u}{2},\frac{u}{2})}{\norm{c_2+(\frac{u}{2},-\frac{u}{2})-c_1-(-\frac{u}{2},\frac{u}{2})}}\\
&\frac{(c_{1,x}+c_{1,y})\norm{c_2+(\frac{u}{2},-\frac{u}{2})-c_1-(-\frac{u}{2},\frac{u}{2})}}{c_{1,x}+c_{1,y}-c_{2,x}-c_{2,y}}+c_1+(-\frac{u}{2},\frac{u}{2})\\
&=\frac{(u,-u)(c_{1,x}+c_{1,y})}{c_{1,x}+c_{1,y}-c_{2,x}-c_{2,y}}+\frac{(c_2-c_1)(c_{1,x}+c_{1,y})}{c_{1,x}+c_{1,y}-c_{2,x}-c_{2,y}}\\
&+c_1+(-\frac{u}{2},\frac{u}{2}).
\end{align*}
Similarly,
\begin{align*}
b'=&\frac{(-u,u)(c_{1,x}+c_{1,y})}{c_{1,x}+c_{1,y}-c_{2,x}-c_{2,y}}+\frac{(c_2-c_1)(c_{1,x}+c_{1,y})}{c_{1,x}+c_{1,y}-c_{2,x}-c_{2,y}}\\
&+c_1+(\frac{u}{2},-\frac{u}{2}).
\end{align*}
So,
\begin{align*}
\infnorm{b-b'}&=\infnorm{\Big(2\frac{c_{1,x}+c_{1,y}}{c_{1,x}+c_{1,y}-c_{2,x}-c_{2,y}}-1\Big)(u,-u)}\\
&=\Big|\frac{c_{1,x}+c_{1,y}+c_{2,x}+c_{2,y}}{c_{2,x}+c_{2,y}-c_{1,x}-c_{1,y}}\Big|\infnorm{(u,-u)}\\
&\leq\frac{4r}{|c_{2,x}+c_{2,y}-c_{1,x}-c_{1,y}|}u.
\end{align*}
Since $(B_1,B_2)$ is a good box pair,
\begin{align*}
c_{2,x}+c_{2,y}-c_{1,x}-c_{1,y} = \left\lVert c_2-c_1 \right\rVert_1 \geq \norm{c_2-c_1} \geq \sqrt{u}.
\end{align*}
Finally,
\begin{align*}
\infnorm{b-b'}&\leq \frac{4}{\sqrt{u}}u = 4 \sqrt{u}. \qedhere
\end{align*}
\end{proof}

\begin{lemma}
\label{weight_lemma}
Let $(B_1,B_2)$ be a good box pair. Let $\hat{\ell}$, $\hat{\ell'}$ be the weights of two lines $\ell$ and $\ell'$ that
pass through the box pair. Then $|\hat{\ell}-\hat{\ell'}| \leq 4\sqrt{u}$. In particular, $W=O(\sqrt{u})$.
\end{lemma}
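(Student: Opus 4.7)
The plan is to reduce this claim directly to the previously established Lemma~\ref{angle_lemma}. Recall from Section~\ref{sec:definition} that $\hat{\ell}=\min\{a_1,a_2\}$ where $a=(a_1,a_2)$ is the positive unit direction vector of $\ell$, and similarly $\hat{\ell'}=\min\{a_1',a_2'\}$ for the unit direction vector $a'$ of $\ell'$. Thus the weight of a slice is obtained from its direction vector by a single coordinatewise minimum, so controlling $|\hat{\ell}-\hat{\ell'}|$ reduces to controlling $\|a-a'\|_\infty$, which is exactly what Lemma~\ref{angle_lemma} provides.

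The key elementary observation is that the map $(x_1,x_2)\mapsto\min\{x_1,x_2\}$ is $1$-Lipschitz with respect to $\|\cdot\|_\infty$: for any $(x_1,x_2),(y_1,y_2)\in\R^2$,
\[
\bigl|\min\{x_1,x_2\}-\min\{y_1,y_2\}\bigr|\leq\max\{|x_1-y_1|,|x_2-y_2|\}=\|(x_1,x_2)-(y_1,y_2)\|_\infty.
\]
This is a standard consequence of the inequality $\min\{x_1,x_2\}\leq\min\{y_1+\Delta,y_2+\Delta\}=\min\{y_1,y_2\}+\Delta$ with $\Delta=\|(x_1,x_2)-(y_1,y_2)\|_\infty$, and symmetrically. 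Applying this with the direction vectors $a,a'$ and invoking Lemma~\ref{angle_lemma}, we get $|\hat{\ell}-\hat{\ell'}|\leq\|a-a'\|_\infty\leq 2\sqrt{u}$, which is in fact sharper than the stated bound of $4\sqrt{u}$.

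For the final claim on $W$, recall that $W$ was defined as the maximal difference between the weight of the center slice $\ell_c$ and any other slice traversing the box pair. Since $\ell_c$ is itself a slice traversing $(B_1,B_2)$, the pairwise bound above applies to the pair $(\ell,\ell_c)$ for every traversing $\ell$, and hence $W\leq 2\sqrt{u}=O(\sqrt{u})$. The main (and only) potential pitfall is the Lipschitz verification for the minimum; beyond that, no new geometric work is needed, as all the heavy lifting has already been done in Lemma~\ref{angle_lemma}.
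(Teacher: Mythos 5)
Your proof is correct, and it is genuinely different from (and cleaner than) the paper's own argument. The paper handles the weight comparison by a case split on which coordinate of the direction vector achieves the minimum: when $\hat{\ell}$ and $\hat{\ell'}$ come from the \emph{same} coordinate, it invokes Lemma~\ref{angle_lemma} directly to get $2\sqrt{u}$; when they come from \emph{different} coordinates (one line steeper than the diagonal, the other shallower), it introduces an intermediate line $\ell''$ with direction $(\tfrac{\sqrt{2}}{2},\tfrac{\sqrt{2}}{2})$ that also traverses the box pair and applies the triangle inequality and Lemma~\ref{angle_lemma} twice, which doubles the bound to $4\sqrt{u}$. Your observation that $(x_1,x_2)\mapsto\min\{x_1,x_2\}$ is $1$-Lipschitz for $\|\cdot\|_\infty$ collapses all four cases into a single one-line reduction to Lemma~\ref{angle_lemma}, avoids the need to exhibit the intermediate line (and the implicit intermediate-value argument guaranteeing its existence), and yields the sharper constant $2\sqrt{u}$. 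Since only the asymptotic statement $W=O(\sqrt{u})$ is used downstream, the improved constant is inconsequential, but the argument is more robust and more elementary. The application to $W$ at the end is also handled correctly, noting that the center slice $\ell_c$ is itself a traversing slice so the pairwise bound applies.
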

\begin{proof}
If $\hat{\ell}=a_1$ and $\hat{\ell'}=a'_1$, then, by applying Lemma \ref{angle_lemma},
\begin{align*}
|\hat{\ell}-\hat{\ell'}|=|a_1-a'_1|\leq \infnorm{a-a'}\leq 2 \sqrt{u}.
\end{align*}
On the other hand, if $\hat{\ell}=a_1$ and $\hat{\ell'}=a'_2$, then there exists a line $\ell''$ passing through the box pair $(B_1, B_2)$ such that $a''=(\frac{\sqrt{2}}{2}, \frac{\sqrt{2}}{2})$. By applying twice Lemma \ref{angle_lemma},
\begin{align*}
|\hat{\ell}-\hat{\ell'}|&=|a_1-a'_2|\leq |a_1- \frac{\sqrt{2}}{2}| + | \frac{\sqrt{2}}{2}-a'_2| \\
&= |a_1- a''_1| + | a''_2-a'_2| \leq \infnorm{a-a''} + \infnorm{a''-a'} \\
&\leq 4 \sqrt{u}.
\end{align*}
The cases $\hat{\ell}=a_2$, $\hat{\ell'}=a'_2$ and $\hat{\ell}=a_2$, $\hat{\ell'}=a'_1$ can be treated analogously to the previous ones.
\end{proof}

\begin{lemma}
\label{lambda_lemma}
Let $(p,q)$, $(p',q')$ be two points in a good box pair $(B_1,B_2)$ and let $\ell$, $\ell'$ be the lines passing through $p$, $q$ and $p'$, $q'$, respectively. In accordance with the usual parametrization, we have that $|\lambda_p-\lambda_{p'}| \leq \sqrt{2}u + 4 \sqrt{u}$ and $|\lambda_q-\lambda_{q'}| \leq \sqrt{2}u + 4 \sqrt{u}$. As a consequence, $L=O(\sqrt{u})$.
\end{lemma}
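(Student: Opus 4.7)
The approach is to recognize that $\lambda_p$ can be interpreted as a plain Euclidean distance, which reduces the lemma to the reverse triangle inequality combined with Lemma \ref{b_lemma}. Specifically, since $p$ lies on $\ell$ with $p = b + \lambda_p a$ where $a$ is a unit vector with non-negative coordinates (positive slope) and $p \in R$ lies above the anti-diagonal $x = -y$ (forcing $\lambda_p \geq 0$), we have the identity $\lambda_p = \norm{p - b}$. The same identity gives $\lambda_{p'} = \norm{p' - b'}$.

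The next step is the reverse triangle inequality:
\begin{equation*}
|\lambda_p - \lambda_{p'}| = \bigl|\norm{p - b} - \norm{p' - b'}\bigr| \leq \norm{(p - p') - (b - b')} \leq \norm{p - p'} + \norm{b - b'}.
\end{equation*}
Since $p, p' \in B_1$, a square of side $u$, one has $\norm{p - p'} \leq \sqrt{2}\, u$. For the base-point term, Lemma \ref{b_lemma} gives $\infnorm{b - b'} \leq 4\sqrt{u}$; because $b$ and $b'$ both lie on the anti-diagonal, the Euclidean norm $\norm{b - b'}$ is controlled by this $\ell^\infty$ bound up to a universal constant factor. The argument for $|\lambda_q - \lambda_{q'}|$ is word-for-word identical with $B_2$ and $q, q'$ in place of $B_1$ and $p, p'$, and therefore $L = \max\{L_1, L_2\} = O(\sqrt{u})$, as required.

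The main obstacle is matching the exact constants stated, namely $\sqrt{2}u + 4\sqrt{u}$: the naive reverse-triangle argument together with the $\ell^\infty$-to-$\ell^2$ conversion along the anti-diagonal yields $\sqrt{2}u + 4\sqrt{2}\sqrt{u}$, which is slightly worse. A tighter bound can be recovered by projecting onto the direction of $\ell$ rather than invoking the full Euclidean distance, or by directly parameterizing both $b$ and $b'$ as $(t,-t)$ and $(t',-t')$ and absorbing the constant more carefully. This is a cosmetic refinement, however; for the use of the lemma in the proof of Theorem \ref{thm:approx}, only the asymptotic conclusion $L = O(\sqrt{u})$ matters.
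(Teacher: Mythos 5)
Your proof is correct and takes essentially the same approach as the paper: identify $\lambda_p=\norm{p-b}$, apply the triangle inequality to split into $\norm{p-p'}$ and $\norm{b-b'}$, and invoke Lemma~\ref{b_lemma}. You also correctly note that the paper's stated constant $\sqrt{2}u+4\sqrt{u}$ is off by a $\sqrt{2}$ on the second term, since Lemma~\ref{b_lemma} controls $\infnorm{b-b'}$ while $b-b'$ lies along the anti-diagonal direction $(1,-1)$, so $\norm{b-b'}=\sqrt{2}\,\infnorm{b-b'}\leq 4\sqrt{2}\sqrt{u}$; as you say, this is cosmetic and does not affect the $O(\sqrt{u})$ conclusion.
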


\begin{proof}
Thanks to the definition of $\lambda_p$, the triangular inequality and Lemma \ref{b_lemma}, we have that:
\begin{align*}
\lambda_p&=\norm{p-b} \leq \norm{p-p'} + \norm{p'-b'} + \norm{b' - b}\\
&\leq \sqrt{2}u + \lambda_{p'} + 4 \sqrt{u}.
\end{align*}
So, we have that $\lambda_p - \lambda_{p'} \leq \sqrt{2}u + 4  \sqrt{u},$
and, similarly, $\lambda_{p'} - \lambda_{p} \leq \sqrt{2}u + 4  \sqrt{u}.$
Then,
$$|\lambda_p-\lambda_{p'}| \leq \sqrt{2}u + 4  \sqrt{u}.$$
Analogously, it can be proven that
\begin{align*}
|\lambda_q-\lambda_{q'}| &\leq \sqrt{2}u + 4  \sqrt{u}. \qedhere
\end{align*}
\end{proof}

\newpage{} 
\end{document}